  \providecommand\BibTeX{{%
    \normalfont B\kern-0.5em{\scshape i\kern-0.25em b}\kern-0.8em\TeX}}}
\newtheorem{Lemma}{Lemma}
\newtheorem{Remark}{Remark}
\newtheorem{Theorem}{Theorem}
\newtheorem{Definition}{Definition}
\newtheorem{Property}{Property}
\newcommand{\com}[1]{\textbf{\color{red}(YL : #1)}} 
\newcommand{\clar}[1]{\textbf{\color{green}(NEED CLARIFICATION: #1)}}
\newcommand{\response}[1]{\textbf{\color{magenta}(RESPONSE: #1)}} 
\newcommand{\com}[1]{}
\newcommand{\clar}[1]{}
\newcommand{\response}[1]{}
\newenvironment{customlem}[1]
  {\innercustomlem}
  {\endinnercustomlem}
\def\rep#1{(\ref{#1})}
\newcommand{\1}{\mathbf{1}}
\newlength\tindent
\newcommand{\algcom}[1]{\textsl{\color{blue}{\footnotesize #1}}}
\def\scr#1{{\mathcal #1}}
\newcommand{\R}{\mathbb{R}}
\newcommand{\bbb}{\mathbb}
\begin{document}

\title{Federated Bandit: A Gossiping Approach}

\author{Zhaowei Zhu}
\authornote{Both authors contributed equally to this research.}
\affiliation{%
  \institution{University of California, Santa Cruz}
  \streetaddress{1156 High St}
  \city{Santa Cruz}
  \state{CA}
  \country{USA}
  \postcode{95064}
}
\email{zwzhu@ucsc.edu}
\author{Jingxuan Zhu}
\authornotemark[1]
\email{jingxuan.zhu@stonybrook.edu}
\affiliation{%
  \institution{Stony Brook University}
  \streetaddress{100 Nicolls Rd}
  \city{Stony Brook}
  \state{NY}
  \country{USA}
  \postcode{11794}
}
\author{Ji Liu}
\email{ji.liu@stonybrook.edu}
\affiliation{%
  \institution{Stony Brook University}
  \streetaddress{100 Nicolls Rd}
  \city{Stony Brook}
  \state{NY}
  \country{USA}
  \postcode{11794}
}
\author{Yang Liu}
\email{yangliu@ucsc.edu}
\affiliation{%
  \institution{University of California, Santa Cruz}
  \streetaddress{1156 High St}
  \city{Santa Cruz}
  \state{CA}
  \country{USA}
  \postcode{95064}
}

\begin{abstract}
In this paper, we study \emph{Federated Bandit}, a decentralized Multi-Armed Bandit problem with a set of $N$ agents, who can only communicate their local data with neighbors described by a connected graph $G$. Each agent makes a sequence of decisions on selecting an arm from $M$ candidates, yet they only have access to local and potentially biased feedback/evaluation of the true reward for each action taken. Learning only locally will lead agents to sub-optimal actions while converging to a no-regret strategy requires a collection of distributed data. Motivated by the proposal of federated learning, we aim for a solution with which agents will never share their local observations with a central entity, and will be allowed to only share a private copy of his/her own information with their neighbors. We first propose a decentralized bandit algorithm \texttt{Gossip\_UCB}, which is a coupling of variants of both the classical gossiping algorithm and the celebrated Upper Confidence Bound (UCB) bandit algorithm. We show that \texttt{Gossip\_UCB} successfully adapts local bandit learning into a global gossiping process for sharing information among connected agents, and achieves guaranteed regret at the order of $O(\max\{ \texttt{poly}(N,M) \log T, \texttt{poly}(N,M)\log_{\lambda_2^{-1}} N\})$ for all $N$ agents, where $\lambda_2\in(0,1)$ is the second largest eigenvalue of the expected gossip matrix, which is a function of $G$. We then propose \texttt{Fed\_UCB}, a differentially private version of \texttt{Gossip\_UCB}, in which the agents preserve $\epsilon$-differential privacy of their local data while achieving $O(\max \{\frac{\texttt{poly}(N,M)}{\epsilon}\log^{2.5} T, \texttt{poly}(N,M) (\log_{\lambda_2^{-1}} N + \log T) \})$ regret. 
\end{abstract}
\keywords{Federated learning; differential privacy; decentralized multi-armed bandit; heterogeneous rewards}

\maketitle

\section{Introduction}
When data resides at distributed ends, soliciting them to a single server to perform centralized learning might compromise users' privacy. Among all solutions, federated learning (FL) \cite{yang2019federated,kairouz2019advances} arises as a promising paradigm, where massive users are allowed to collaboratively train a model while keeping the training data decentralized at local. In this paper, we introduce \texttt{federated bandit} with fully decentralized users/decision-makers and heterogeneous rewards. Our aim is to provide a solution to enable collaborative learning among decentralized sequential decision-makers in the classical multi-armed bandit (MAB) setting, but with strong (i) regret guarantee even with heterogeneous reward observations, and (ii) privacy guarantees of each user's local data.

\textbf{Motivations }
We are motivated by a critical federated learning scenario where multiple agents hold different and heterogeneous datasets or locally optimal models for the same task \cite{yang2019federated,wang2020local,Li2020On}.
Assuming heterogeneous datasets or models is more practical and challenging than homogeneous ones.
Intuitively, one homogeneous dataset should be uniformly selected from the whole dataset, while a heterogeneous dataset could be generated by any sampling strategies.
This heterogeneity exists in practice for multi-fold of reasons: it could be because of local observation and data collection errors, or it could be due to sampling biases, but the goal of each agent is to cooperatively smooth out these local biases and learn the true optimal action while protecting its private information.
Note that learning with homogeneous datasets is unbiased even without cooperation while it does not hold for heterogeneous ones.%

\textbf{Examples }
Collaborative research among countries is vital when facing global health emergencies, like COVID-19.
The collected observations of the effectiveness of treatments might encode local biases due to the difference in the training of medical staff, in following the protocols and equipment, and the difference in the underlying diseases of patients, etc. Therefore, the observed effect buries noise and local biases. 
For another example, the local datasets of different federated agents may be quite different. Consider an image classification task where agent $1$ holds a dataset with nearly all images being labeled as ``CAT'' while  agent $2$ holds a dataset dominated by ``DOG''. The training will be biased and the local model of each agent may always predict the corresponding dominate label.
In these cases, sharing local models or estimates helps average out or smooth out the biases to obtain a more accurate model for predictions.

To learn the globally optimal model, there is a minimum communication requirement for the heterogeneous datasets.
Coupled with privacy concerns, the communication (with minimum costs) among devices is challenging in federated learning, especially in our interested fully-decentralized scenario \cite{yang2019federated,kairouz2019advances,li2019federated}.
As the first step to solving this challenging problem, we consider the simplest stochastic bandit scenario and focus on the ``minimum'' communication, consistency, and privacy.%
Specifically, consider the following running example.
Suppose that multiple hospitals decide to test the effectiveness of different treatment plans (\emph{arms}). 
Due to the limitations such as data size, health condition and demographics of the patient population, and the details of how a treatment is used \cite{li2019federated}, each individual hospital may not be able to fully and truthfully observe the effect of treatments. 
In other words, individual hospitals will only observe locally biased feedback on the deployed treatment (\emph{heterogeneous rewards}). Sharing observations across institutes is therefore helpful to improve the decision-making process to make sure each local hospital will not converge to a biased and sub-optimal decision. However, due to privacy regulation, it is hard and expensive to call for centralized efforts to coordinate a transportation of data among hospitals. On the other hand, it is relatively easier for individual hospitals to reach agreements to share their observed treatment plan and effects with several others in an ad hoc way.

The hospital treatment selection problem mentioned above is effectively a sequential decision-making problem which can be abstracted as a MAB one. %
Formally, there is a group of $N$ decision-makers facing a common set of arms. At each step $t = 1,2,\ldots,T$, each decision-maker selects one arm in parallel. Decision-makers only have access to local \emph{biased} rewards. Therefore, the agents' individually observed rewards do not fully reflect the true quality of each arm. Instead, the arms' true rewards are collectively decided by all decision-makers' local observations. 
In our heterogeneous setting, we consider a tractable scenario where the true quality of each treatment (arm) is the average of all hospitals' (agents') locally observed quality (in expectation). Each user aims to select the best arm via exchanging information only with their neighbors privately.

\textbf{Overview of technical challenges }
The key technical challenge of the above learning problem lies in the fully-decentralized information sharing and privacy protection with sequential observations.
Firstly, to reduce the communication overhead and privacy leakage during decentralized information sharing, we desire a solution with agents only sharing the information over the adjacency matrix (graph) of neighbor agents in a gossiping way. Note classical gossiping methods \cite{Boyd2006} do not incorporate individual decision-maker's newly observed reward information. Accordingly, the resulting information at all other agents may not converge and reflect the true statistics of each arm adaptively, which is especially true in the fully-decentralized and heterogeneous settings. Tacking the coupling effects of gossiping and bandit learning is challenging.
Secondly, even though the gossiping update is better than directly sharing data in terms of privacy, we still need a mechanism to ensure a specific privacy level in the worst case by assuming a powerful adversary. We adopt the solution concept differential privacy (DP) \cite{dwork2006calibrating,dwork2014algorithmic,erlingsson2014rappor,ding2017collecting} and extend our gossiping bandit solution to a differentially private one. %

\textbf{Contributions }
In this paper, we attempt to solve the above federated bandit learning problem: (1) We introduce a novel extension of the classical MAB problem to a fully-decentralized federated learning setting with gossiping, where an individual decision-maker only has access to biased rewards, and agents have limited communication capacity and can only exchange their beliefs of rewards with neighbors; (2) We propose \texttt{Gossip\_UCB} to solve the challenges for combining gossiping with bandit learning processes, and develop novel proof techniques to tackle the coupling effects of such combination and guarantee its regret. In particular, we provide a concentration bound for local estimates of arm's reward using gossiping. 
(3) To ensure the differential privacy for each agent, we extend the proposed gossiping bandit algorithm to \texttt{Fed\_UCB}, and prove agents preserve $\epsilon$-differential privacy of their local data while achieving $O(\max \{\frac{\texttt{poly}(N,M)}{\epsilon}\log^{2.5} T, \texttt{poly}(N,M) (\log_{\lambda_2^{-1}} N + \log T) \})$ regret. \texttt{Fed\_UCB} is also tested using real medical dataset \cite{strack2014impact}.
(4) To the best of our knowledge, \texttt{Fed\_UCB} is the first fully decentralized bandit learning framework that handles heterogeneous data sources with a privacy guarantee. The results lay the foundation to study more sophisticated and probably more practical settings (e.g., contextual bandit setting to further handle population biases at each local agent).

\subsection{Related Works}
Most relevant to our paper are three lines of works:
 
\textbf{Distributed MAB }
Recently, MAB problems have been studied within multi-agent settings
\cite{leonard,leonard2020,jain,liu2010distributed,ilai2018distributedbandit,kalathil2014decentralized,tossou2015differentially,dubey2020cooperative,korda2016distributed,jingxuan,sankararaman2019social}. But these works mostly either do not consider a consensus reaching in cheap communication setting (gossiping), or do not target on heterogeneous rewards where agents' observations incorporate local bias.
For example, instead of reaching consensus among agents, \cite{jain,liu2010distributed,ilai2018distributedbandit,kalathil2014decentralized,tossou2015differentially} focused on avoiding the collision in wireless communication or cognitive radio.
The homogeneous rewards were assumed in
\cite{leonard,leonard2020,sankararaman2019social,Wang2020Distributed,chakraborty2017coordinated,martinez2019decentralized,korda2016distributed,jingxuan,sankararaman2019social}. However, the rewards in federated learning setting should be heterogeneous due to various limitations \cite{li2019federated,shen2021AAAI}.
To the best of our knowledge, we are the first paper studying the heterogeneous reward in decentralized MAB problems. At the meantime, we are recently made aware of a parallel contribution which deals with the heterogeneity by designing the client sampling strategy of a central server \cite{shen2021AAAI}.%

\textbf{Information propagation and gossiping }
The idea of gossiping was originally proposed to solve the \emph{consensus reaching}  problem in distributed computation \cite{Boyd2006,Basar2007,Morse2003,Morse2011p,MouraSurvey,MurraySurvey}, and questions about gossiping convergence rate were studied in \cite{Boyd2006,Morse2011p,Ts2009}.
It has also been used to solve distributed problems, such as convex optimization, ranking, and voting problems; and more recently to computing machine learning related statistics. Notable examples include \cite{korada2011gossip} for calculating PCA, \cite{pelckmans2009gossip,colin2015extending} for computing U-Statistics, \cite{sirb2018decentralized,liu2018differentially} for computing gradients, \cite{hegedHus2019gossip} for federated learning, and \cite{romoff2019gossip} for reinforcement learning.

\textbf{Federated learning and privacy preserving bandit }
Due to the high demand for privacy protection across different sectors such as financial, medical, and government systems, federated learning is becoming a trending solution that has been widely discussed \cite{kairouz2019advances,bonawitz2016practical,fl:communication,strack2014impact,yang2019federated,su2019securing}. Recently, differential privacy has also been adopted in solving MAB problems while ensuring privacy \cite{Mishra2014PrivateSM,tossou2016algorithms,Malekzadeh2019PrivacyPreservingB,dubey2020private,li2020federated,dubey2020differentially}, but they either consider a single agent problem or use a homogeneous reward setting. 
Specifically, the authors of \cite{li2020federated} assume homogeneous rewards (different agents hold the same mean of each arm) and extend the non-private UCB1 algorithm to a private version with $(\epsilon, \delta)$-DP guarantee.
A multi-agent DP solution in linear bandits (based on linear UCB) is proposed in \cite{dubey2020differentially}. It also assumes the homogeneous setting since all the agents have the same optimal linear model.%
We will follow the idea of DP in our work and offer a theoretically rigorous treatment for our federated bandit problem.

\section{Problem Formulation}

Consider a network consisting of $N$ agents, we assume $N\ge 3,$ since when $N=2,$ it is equivalent to a centralized model thus solution is trivial.
For ease of presentation, we label the agents from $1$ through $N$.
The agents are not aware of such a global labeling, but can differentiate between their neighbors. The set of agents is denoted by $[N]=\{1,2,\ldots,N\}.$
All agents face a common set of $M$ arms, denoted by $[M]=\{1,2,\ldots,M\}$.
At each discrete time $t\in\{1,2,\ldots,T\}$, each agent $i$ makes a decision on which arm to select
from the $M$ options; the selected arm is denoted by $a_i(t)\in [M]$.
When agent $i$ selects an arm $k\in [M]$, the agent collects a reward which is generated according to a random variable $X_{k}(t)$.\footnote{
Different agents may select the same arm $k$ at same time $t$. If this is the case, their rewards can be different as they may collect different realizations of $X_k(t)$.}
But the agent cannot observe its exact reward; instead,
it observes a locally biased ``noisy'' copy of the reward, which is generated
according to another random variable $X_{i,k}(t)$.
The unobservability of $X_k(t)$ can be due to local observational bias caused by the biased local data or low-quality (irresponsible or even malicious) agents \cite{liu2015online}.
We assume that $\{X_k(t)\}_{t=1}^T$ and $\{X_{i,k}(t)\}_{t=1}^T$ are i.i.d. random processes.
For simplicity of analysis, we also assume that all $X_k$'s and $X_{i,k}$'s have bounded support $[0,1]$.
Besides, the feedback $X_{i,k}$ for agent $i$ is assumed to be obtained without delays since the delayed feedback itself is challenging \cite{cayci2019learning,joulani2013online} and we focus on the delayed impact during gossiping.
The relationship between $X_k(t)$ and $X_{i,k}(t)$ is as follows.
Let $\mu_k$ and $\mu_{i,k}$ be the mean of $X_k(t)$ and $X_{i,k}(t)$, respectively.
For each $k\in [M]$, the mean of arm $k$'s reward equals the average\footnote{It can be generalized to the cases where the global reward is defined as any ``convex combination'' of all local rewards following the ``push sum'' idea \cite{kempe2003gossip}. It can also be defined statistically using expectation \cite{shen2021AAAI}.}
of the means of all
agents' observed rewards,
 i.e.,
 \vspace{-2pt}
$$
{\mu_k := \frac{1}{N}\sum_{i=1}^N \mu_{i,k}},
$$
\vspace{-2pt}
which implies that the true reward can be obtained by averaging and thus cancelling out local biases. 
Note the heterogeneous reward {can model the systematic observation bias or the bias of datasets, which}
is more general and meaningful than the homogeneous reward, especially in FL settings \cite{yang2019federated} {where each agent's systematic observation bias makes the locally optimal solution does not correspond to the real optimal action.
Although the bias of datasets is probably more suitable to be modeled as contextual bandits in practice, we currently focus on the classical bandit setting for a theoretically sound solution, which is also an essential foundation for future practically feasible extensions.}
Without loss of generality, suppose $\mu_1 \geq \mu_2 \geq \cdots \geq \mu_M$,
which implies that arm 1 is the best option. The difference of each arm's mean reward is denoted by $\Delta_k = \mu_1 - \mu_k.$ The {\em federated bandit problem} is for each agent $i$ to minimize the following (weak) {\em regret}:
 \begin{align*}
R_i(T) = T\mu_1 - \sum_{t=1}^T \mathbb E\left[X_{a_i(t)}(t)\right]~,
\end{align*}
with the goal of achieving $R_i(T) = o(T)$ (i.e., $R_i(T)/T \rightarrow 0$ as $T\rightarrow\infty$) for all $i\in [N]$.
It is worth noting that each agent $i$ only observes $X_{i,k}$, $k\in [M]$, and
$\mu_{i,1}$ is {\em not} necessarily the largest among $\mu_{i,1}, \mu_{i,2}, \ldots, \mu_{i,M}$.
A key property of our problem setting, which distinguishes from the existing literature, is that no single agent can learn the optimal decision without communication. The setting thus better motivates cooperative federated learning in peer-to-peer networks.%
The heterogeneous reward structure is ready for extension to linear bandits settings \cite{dubey2020differentially} where agents hold different locally optimal combinations of arms or a broader contextual case \cite{dubey2020kernel,li2010contextual} by involving the context (e.g. the feature or local feature \cite{london2019logarithmic,rubinfeld2011fast} of each example).

\subsection{Privacy Guarantee}

In federated bandits, for a given $T$, each agent would like to preserve the privacy of their observations of the selected arms, i.e. the sequential observations $\{X_{i,k}(t)\}_{t=1}^T$.
Differential privacy (DP) is one popular measure to quantify the privacy level of an algorithm $\mathcal B$ \cite{dwork2006calibrating}.
A DP mechanism can make the adversary hard to distinguish two adjacent streams $\{X_{i,k}(t)\}_{t=1}^T$ and $\{X'_{i,k}(t)\}_{t=1}^T$, which differ at a particular $t$.
Let $\mathcal C$ be the space of all possible outputs by Algorithm $\mathcal B$. DP is defined as:
\begin{Definition} (Differential privacy \cite{dwork2006calibrating})\label{def:DP}
A (randomized) algorithm $\mathcal B$ is $\epsilon$-differentially private if for any adjacent streams $\{X_{i,k}(t)\}_{t=1}^T$ and $\{X'_{i,k}(t)\}_{t=1}^T$, and for all sets $\mathcal O \in \mathcal C$,
\[
 \mathbb P \left[\mathcal B(\{X_{i,k}(t)\}_{t=1}^T) \in \mathcal O\right] \le e^\epsilon \cdot \mathbb P\left[ \mathcal B(\{X'_{i,k}(t)\}_{t=1}^T) \in \mathcal O\right].
\]
\end{Definition}
Moreover, in the online settings, the length of sequential observations, i.e. $T$, changes over time. Thus the differential privacy in federated bandits should be guaranteed on every $T$.

\subsection{Communication Graph}
The neighbor relationships among the agents is described by a simple, undirected, connected graph $G=(V,E)$,
whose vertices correspond to agents and edges characterize neighborhood relationships. Denote $\mathcal N_i$ as the set of nodes that are directly connected to agent $i$. 
Then, $\mathcal N_i$ is also the set of agent $i$'s neighbors. 
We follow the setting in the classical gossiping setting \cite{Boyd2006} that at each time $t$,  exactly one pair of two neighboring agents %
on an edge in $E$ are activated and exchange information.

\section{Gossip UCB}\label{Sec:GossipUCB}

Before we offer the privacy-preserving solution, we first introduce an extension of the classical Upper Confidence Bound algorithm to a gossiping setting. 
As may be noticed, in the classical gossiping setting \cite{Boyd2006}, the consensus is defined over initial data only. While in our setting, not only is the gossiping process required to incorporate with newly arrived data from each agent, but also the gossiped information will affect the arm to be selected and thus the observed data of an agent in the future. 
We present an algorithm, called \texttt{Gossip\_UCB}, to solve the gossiping bandit problem.
The algorithm hinges on combining and extending the classical gossiping algorithm and 
the celebrated UCB1 index policy\cite{auer2002finite}; yet our algorithm requires substantial changes for both the gossiping and bandit learning steps. While we will present the algorithm and analysis in detail, we outline here several crucial steps:
\begin{enumerate}
    \item[\textbf{(1)}] Different from the classical gossiping algorithm, the gossiping procedure will incorporate new information from each agent's local sampling and observations at each step $t$. We modify the classical gossip algorithm by adding the ``\emph{gradient}'' information at each step, which capture the newly arrival information.
    \item[\textbf{(2)}] The coupling effects: compared to standard bandit learning with only one decision maker where the traditional sample complexity bound can be employed, we need to cope with various \emph{uncertainties} during gossiping data exchange. These uncertainties lead to further errors in each local agents' decisions, and then will again disrupt the gossiping process.
    \item[\textbf{(3)}] A \emph{fully-decentralized} structure requires designing a local information sharing mechanism. In addition, the \emph{delayed impact} of local information sharing should be bounded analytically for computing the confidence bound locally.
\end{enumerate}

\subsection{Preliminaries}\label{Sec:notations}

We define several quantities that will help us present our algorithm and analysis smoothly.

\textbf{Sample counts:} Each agent $i$ maintains two sets of counters:\\
\noindent $\bullet$ $n_{i,k}(t)$: the number of times agent $i$ has sampled arm $k$ by time $t$; \\
\noindent $\bullet$ $\tilde{n}_{i,k}(t)$: agent $i$'s local estimate of global maximum of pulls on arm $k$ and is updated as:
\begin{align}
    \tilde{n}_{i,k}(t+1)=\max\{n_{i,k}(t),\tilde{n}_{j,k}(t),j\in \mathcal N_i\}. \label{tilden}
\end{align}
The initial value $\tilde{n}_{i,k}(t)$ is defined as $\tilde{n}_{i,k}(0)=n_{i,k}(0).$
We assume agent $i$ can observe $\tilde{n}_{j,k}(t),\;j\in\scr N_i,$ thus is able to update $\tilde{n}_{i,k}(t)$ at each time.

\textbf{Sample mean:} Let $\mathds 1(\cdot)$ be an indicator function that returns 1 when the specific condition holds and 0 otherwise.
Sample mean {\small$\tilde{X}_{i,k}(t)$} is the average observation of agent $i$ on arm $k$ at time $t$: \begin{equation}\label{Eq:def_tildeX}
    \tilde{X}_{i,k}(t) = \frac{1}{n_{i,k}(t)}{\sum_{\tau=1}^t \mathds 1(a_i(\tau) = k) \cdot X_{i,a_i(\tau)}(\tau)}.
\end{equation}

\textbf{Estimate of rewards:} Each agent $i$ maintains an estimate of the reward of arm $k$ at time $t$, which is supposed to be unbiased and denoted by $\vartheta_{i,k}(t)$. The agents' goal is to narrow the gap between $\vartheta_{i,k}(t)$ and $\mu_k$ with sequential observations and gossiping. 

\textbf{Upper confidence bound:} 
In the UCB algorithm, agent $i$'s belief on each arm $k$ relies on two terms: the estimate $\vartheta_{i,k}(t)$ and the upper confidence bound $C_{i,k}(t)$. The latter term denotes the uncertainty of belief.
The arm to be pulled by agent $i$ is denoted as 
$a_i(t) = \arg\max_k  ~\vartheta_{i,k}(t-1)+C_{i,k}(t)$.

\textbf{Gossiping matrix:} 
Denote the gossiping matrix over $G$ as
$$W:= \frac{1}{|E|}\sum_{(i,j) \in E}\left( I_{N} - \frac{1}{2}(e_i-e_j)(e_i-e_j)^{\top}\right),$$ 
which is a positive semi-definite matrix whose largest eigenvalue equals 1, and its second largest eigenvalue is denoted by $\lambda_2(W)$ and short-handed as $\lambda_2$ without ambiguity.
Note $\lambda_2<1$ whenever $G$ is connected \cite{Boyd2006}.

\begin{figure}
    \centering
    \includegraphics[width=\textwidth]{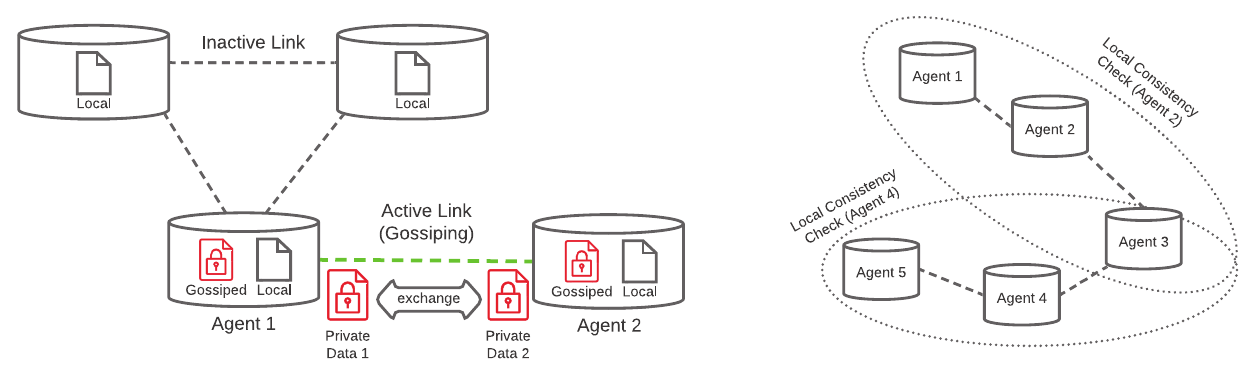}
    \caption{Illustration of our gossiping procedures. At time $t$, only one link is activated for gossiping and (private) data is exchanged via this link. Local consistency is checked by sharing local sample counts.}
    \label{fig:illustrate}
\end{figure}

\subsection{Algorithm}
\texttt{Gossip\_UCB} is detailed in Algorithm \ref{alg:gossip_UCB} and illustrated in Fig.~\ref{fig:illustrate}. Each agent runs this algorithm in parallel. %
Note all the information is shared in a fully distributed fashion and the bandit estimate is updated in a gossiping way. There are two points worth noting.

\textbf{Local information sharing }
Throughout the algorithm, agents need to share two local variables with their neighbors: the local estimate of global maximum of pulls $\tilde n_{i,k}(t)$ and the estimate $\vartheta_{i,k}(t)$.
The sample count $\tilde n_{i,k}(t)$ is shared to keep all the agents ``on the same page''.
Note the bottleneck of a bandit problem is insufficient observations of a particular arm $k$, and the essential of UCB algorithms is encouraging the exploration of these ``undersampled'' arms. 
In the multi-agent scenario, we can take the advantage of neighboring agents and require some local consistency in sample counts. Particularly, we want to keep all agents' knowledge of arm $k$ progressing by encouraging $n_{i,k}(t) \ge \tilde n_{i,k}(t) - N$ in line \ref{line:localN}.
Recall $\tilde{n}_{i,k}(t)$ is agent $i$'s local estimate of global maximum number of pulls, and is updated by local observations only as is defined in the previous section. 
Moreover, we will prove that, this requirement helps \emph{get rid of relying on any global sample count} thus $C_{i,k}(t)$ can be computed by each agent locally.
The estimate $\vartheta_{i,k}(t)$ is updated in a gossiping way.
Besides, compared to existing the decentralized solution \cite{dubey2020differentially} which requires synchronization via broadcasting necessary information at particular checkpoints, our gossiping algorithm is implemented in a more ad-hoc way.%

\textbf{Gossip bandit update }
The gossip updates are defined in line \ref{line:gossip1} and line \ref{line:gossip0}.
In traditional bandit problems, it is enough for each agent to maintain $\tilde X_{i,k}(t)$.
However, in our concerned gossiping setting, solely relying on $\tilde X_{i,k}(t)$ may induce a biased estimate. The gossiping mechanism follows \cite{liu2018differentially}, where the difference $\tilde{X}_{i,k}(t)-\tilde{X}_{i,k}(t-1)$ can be seen as a gradient. Later we will show the effectiveness of the proposed gossip bandit update.

\begin{algorithm}[!t]
\LinesNumbered
\DontPrintSemicolon   
\SetKw{KwRet}{Return}
\KwIn{$ G,T,C_{i,k}(t)$}
\BlankLine
\textbf{Initialization:} Each agent pulls each arm once, and receives a reward $X_{i,k}(0)$, $i\in [N]$, $k\in[M]$. Set $n_{i,k}(0)=1$, $\vartheta_{i,k}(0)=\tilde{X}_{i,k}(0)=X_{i,k}(0).$
\BlankLine
\For{$t = 1,\ldots,T$}{
$\mathcal A_i = \varnothing$ \;
$n_{i,k}(t)=n_{i,k}(t-1), \forall k\in[M]$  \;
$\tilde{n}_{i,k}(t+1)=\max\{n_{i,k}(t),\tilde{n}_{j,k}(t),j\in \scr N_i\},\forall k\in[M]$ \;
Put $k$ into set $\scr{A}_{i}$ ~\textbf{if} ~$n_{i,k}(t)<\tilde{n}_{i,k}(t)-N, \forall k\in[M] $ \label{line:localN}  \hfill  \algcom{// local consistency requirements} \\
\eIf{$\scr{A}_{i}$ is empty}{
\For{$k=1,\dots,M$}{
$Q_{i,k}(t):=\vartheta_{i,k}(t-1)+C_{i,k}(t)$ \label{line:arm1}  \hfill \algcom{// update the belief on each arm}\\
$a_{i}(t)=\arg \max_{k} Q_{i,k}(t)$ \label{line:arm2} \hfill \algcom{// select the best arm to pull}
}}{
$a_{i}(t)$ is randomly selected from $\scr A_{i}$\label{alg:one_arm}}
Observe arm $a_i(t)$, get $X_{i,a_i(t)}(t)$, and update $\tilde X_{i,k}(t), \forall k,$ following (\ref{Eq:def_tildeX})\; $n_{i,a_{i}(t)}:=n_{i,a_{i}(t)}+1$ \;
\eIf{agent $i$ is selected to gossip with agent $j$}{
agent $i$ sends $\vartheta_{i,k}(t-1)$ to agent $j$\;
agent $i$ receives $\vartheta_{j,k}(t-1)$ from agent $j$\;
$\vartheta_{i,k}(t):=\frac{\vartheta_{i,k}(t-1)+\vartheta_{j,k}(t-1)}{2}+\tilde{X}_{i,k}(t)-\tilde{X}_{i,k}(t-1)$ \label{line:gossip1} \hfill  \algcom{// gossiping update}
}{
$\vartheta_{i,k}(t):=\vartheta_{i,k}(t-1)+\tilde{X}_{i,k}(t)-\tilde{X}_{i,k}(t-1)$ \label{line:gossip0} \hfill  \algcom{//  normal update}}
}
\caption{{\texttt{Gossip\_UCB}}}
\label{alg:gossip_UCB}
\end{algorithm}

\subsection{Technical Challenges}
Note the arm selection in Algorithm~\ref{alg:gossip_UCB} relies on the upper confidence bound $C_{i,k}(t)$. A UCB-based solution requires us to find an appropriate choice of $C_{i,k}(t)$ that allows us to provide guaranteed bound of each agent's regret when implementing \texttt{Gossip\_UCB}.

The main technical challenge is tackling the \emph{coupling effects of gossiping and bandit learning}.
On a high level, classical technical results in gossiping assumed a \emph{static} piece of information that would not change much during the entire gossiping phase. 
The literature \cite{szorenyi2013gossip} often adopted a phased-based learning strategy (by caching the gossiped information) to avoid changes, which effectively delays the update of learned policy (the learning needs to wait for the gossiping to converge).
Since agents only share information with their neighbors, globally, there is latency in receiving this data at the non-directly connected agents. 
Additionally, with the existence of multiple agents, ensuring local consistency as lines \ref{line:localN} and \ref{alg:one_arm} will incur extra delay impacts when $|\mathcal A_i|>1$.
The \emph{delayed impact} affects the immediate decisions taken by other agents and further affects the gossiping process in the near future. This is a salient challenge when the agents make heterogeneous observation, as compared to the homogeneous setting \cite{sankararaman2019social}.
From the bandit learning's perspective, this delay might lead to inaccurate computation of the index policies. A poorly made decision will further have cascading effects to other receiving agents, which is especially challenging in heterogeneous settings.
The key step to tackle this challenge is to firstly characterize the consistency among agents and then find an concentration bound for $\vartheta_{i,k}(t)$, the local estimate. %

\subsection{Guarantees on the Consistency among Agents}\label{sec:consistency}

Intuitively speaking, the minimum number of selection over agents controls the quality of the average statistics. 
Recall $\tilde n_{i,k}(t)$ denotes agent $i$'s local estimate of global maximum of pulls on arm $k$.
In line \ref{line:localN}, we encourage $n_{i,k}(t) \ge \tilde n_{i,k}(t) - N$
to make sure all agents' knowledge of arm $k$ is ``on the same page'', i.e. locally consistent. This consistency requirement is essential in multi-agent scenarios with heterogeneous rewards to make sure that local ``estimates'' of the number of arm selections are boundedly consistent across the network.
Particularly, with local consistency, the gap of sample counts among neighboring agents, i.e. $n_{j,k}(t), j\in \mathcal N_i$ could be bounded.
Moreover, the local consistency implies global consistency, i.e., the number of arm selections performed by each agent should be consistent globally,
such that the local estimates of each agent should be precise enough to derive a confidence bound without global information.
This further enables a proof of a concentration bound for $\vartheta_{i,k}(t)$ in Section \ref{sec:concentration_bound}.

However, due to the limitation of the bandit feedback (one arm each time) as shown in line \ref{alg:one_arm}, extra delays will occur thus $n_{i,k}(t) \ge \tilde n_{i,k}(t) - N$ \emph{cannot be guaranteed} when $|\mathcal A_i|>1$.
These extra delays will be further amplified due to the delay of information propagation via $G$.
In the following of this subsection, we will analyze the local consistency and show how agents' knowledge of $n_{i,k}$s will remain ``consistent".
The analyses are based on the information propagation on the neighbor graph $G$, where we use $d_{i,j}$ to characterize the distance (the number of directed edges in the shortest directed path) from vertex $i$ to vertex $j$ in graph $G$.
Note $d_{i,i}=0$ and $d_{i,j}<N$ since $G$ is strongly connected.
W.l.o.g., let $n_{i,k}(t)=0, \forall i\in[N], k\in[M]$ when $t<0$.
The analyses can be summarized into \emph{three steps}: modeling the information propagation (Step 1), bounding the actually achieved local consistency (Step 2), and characterizing the relationship of knowledge among agents in the fully-decentralized setting (Step 3).
\subsubsection*{\textbf{Step \uppercase\expandafter{\romannumeral1}: Information propagation:}}~\\ %
As in (\ref{tilden}), $\tilde{n}_{i,k}(t+1):=\max\{n_{i,k}(t),\tilde{n}_{j,k}(t),j\in \mathcal N_i\}$ is defined recursively. Considering the fact that each agent $i$ can observe $\tilde{n}_{j,k}(t),\;j\in\scr N_i$ and update $\tilde{n}_{i,k}(t)$ at each time $t$, we know the information $n_{i,k}(t)$ propagates distance $1$ when time goes from $t$ to $t+1$. Thus it is possible to model the information propagation with distance $d_{i,j}$. We use Lemma~\ref{max} to characterize the delayed impact of information propagation.
\begin{Lemma}\label{max}
{(Information propagation)}
For any $i\in[N]$ and $k\in[M]$,
\begin{align}
    \tilde{n}_{i,k}(t+1) = \max_{j\in[N]}\left\{n_{j,k}(t-d_{j,i})\right\}.\label{maxdis}
\end{align}
\end{Lemma}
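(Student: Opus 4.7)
The plan is to prove \eqref{maxdis} by induction on $t$, using two elementary ingredients: (i) $n_{\ell,k}(\tau)$ is non-decreasing in $\tau$, and (ii) graph distances satisfy $d_{\ell,i} = 1 + \min_{j \in \mathcal{N}_i} d_{\ell,j}$ for every $\ell \neq i$, since any shortest path from $\ell$ to $i$ must enter $i$ through one of its neighbors.

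For the base case $t = 0$, the initialization of Algorithm \ref{alg:gossip_UCB} gives $n_{j,k}(0) = 1$ and $\tilde{n}_{j,k}(0) = n_{j,k}(0) = 1$ for every $j$, so the recursion \eqref{tilden} yields $\tilde{n}_{i,k}(1) = 1$. The right-hand side of \eqref{maxdis} at $t=0$ is $\max_{j \in [N]} n_{j,k}(-d_{j,i})$: the term $j = i$ contributes $n_{i,k}(0) = 1$, while each $j \neq i$ has $d_{j,i} \geq 1$ and hence $n_{j,k}(-d_{j,i}) = 0$ under the convention $n_{j,k}(\tau) = 0$ for $\tau < 0$. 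Both sides equal $1$.

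For the inductive step, assume the formula holds at time $t$, i.e.\ $\tilde{n}_{j,k}(t) = \max_{\ell \in [N]} n_{\ell,k}(t-1-d_{\ell,j})$ for every $j$. Substituting into \eqref{tilden} and swapping the order of the two max operations, I obtain
\begin{equation*}
\tilde{n}_{i,k}(t+1) = \max\!\left\{n_{i,k}(t),\ \max_{\ell \in [N]} \max_{j \in \mathcal{N}_i} n_{\ell,k}\!\left(t - 1 - d_{\ell,j}\right)\right\}.
\end{equation*}
By monotonicity (i), the inner max equals $n_{\ell,k}\!\left(t - 1 - \min_{j \in \mathcal{N}_i} d_{\ell,j}\right)$; by (ii), this is $n_{\ell,k}(t - d_{\ell,i})$ for $\ell \neq i$. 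The $\ell = i$ contribution of the inner max equals $n_{i,k}(t-2) \le n_{i,k}(t)$ and is absorbed by the first argument of the outer max. Folding in $n_{i,k}(t) = n_{i,k}(t - d_{i,i})$ then yields exactly $\max_{\ell \in [N]} n_{\ell,k}(t - d_{\ell,i})$.

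The main subtlety is the commutation of the two maxima with the distance identity: the inner maximization over neighbors $j$ of $i$ effectively selects the neighbor lying on a shortest path from $\ell$ to $i$, and monotonicity of $n_{\ell,k}$ in time is what lets this discrete optimization collapse into a single shifted index $t - d_{\ell,i}$. The remaining loose ends—absorbing the $\ell = i$ term in the double max and reconciling the base case with the convention $n_{\ell,k}(\tau) = 0$ for $\tau < 0$—are purely bookkeeping.
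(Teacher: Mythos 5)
Your proof is correct and follows essentially the same route as the paper's: induction on $t$ with the identical base case, relying on monotonicity of $n_{j,k}(\cdot)$ and the shortest-path structure of the distances. The only difference is organizational—where the paper sandwiches the inductive step between an upper bound (via the triangle inequality $d_{h,i}\le 1+d_{h,j}$) and a lower bound (by chaining \eqref{tilden} along a shortest path), you collapse both into a single equality chain using $d_{\ell,i}=1+\min_{j\in\mathcal N_i}d_{\ell,j}$, which packages the same two facts more compactly.
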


\begin{proof}
We will prove the lemma by induction on $t$.
For the basis step, suppose that $t=0$.
In this case, $\tilde{n}_{i,k}(1)=\max\{n_{i,k}(0),\;\tilde{n}_{j,k}(0),\;j\in\scr{N}_i\}=1.$ Note that $\max_{j\in[N]}\{n_{j,k}(t-d_{j,i})\}=n_{i,k}(0)=1.$ Thus, \rep{maxdis} holds when $t=0.$

For the inductive step, assume \eqref{maxdis} holds at time $t,$ and now consider time $t+1.$ 
Note that
\begin{align*}
    \Tilde{n}_{i,k}(t+1)&=\max\{n_{i,k}(t),\;\tilde{n}_{j,k}(t),\;j\in\scr N_i\}\\
    &=\max\{n_{i,k}(t),\;n_{h,k}(t-d_{j,h}-1), h\in[N],\;j\in\scr N_i\}.
\end{align*}
Noting $d_{h,i}$ is the distance from vertex $h$ to vertex $i$, the following inequality holds: $d_{h,i}\leq d_{j,i}+d_{h,j}=1+d_{h,j}$.
Since $n_{i,k}(t)$ is a non-decreasing function of $t$ by its definition, we have
\begin{align}
     \Tilde{n}_{i,k}(t+1)&\leq \max\{n_{i,k}(t),\;n_{h,k}(t-d_{h,i}),\;h\in[N]\}=\max_{j\in [N]}\{n_{j,k}(t-d_{j,i})\}.\label{leqmax}
\end{align}
Fix any vertex $j\in[N]$ and let $p=(j,v_{d_{j,i}},\ldots,v_2,i)$ be a shortest directed path from $j$ to $i$ in $G$. From \eqref{tilden},
\begin{equation}\label{distance}
    \begin{split}
        \tilde{n}_{i,k}(t+1)&\geq \tilde{n}_{v_2,k}(t)\geq\cdots\geq\tilde{n}_{v_{d_{j,i}},k}(t-d_{j,i}+2)\geq\tilde{n}_{j,k}(t-d_{j,i}+1)\geq n_{j,k}(t-d_{j,i}).
    \end{split}
\end{equation}
Since $j$ is arbitrarily chosen from $[N],$ we have $\tilde{n}_{i,k}(t+1)\geq \max_{j\in[N]}\{n_{j,k}(t-d_{j,i})\}.$
Combining with \eqref{leqmax}, we have 
\begin{align*}
    \tilde{n}_{i,k}(t+1)= \max_{j\in[N]}\{n_{j,k}(t-d_{j,i})\}.
\end{align*}
Therefore, equation \eqref{maxdis} also holds at $t+1,$ which completes the induction.
\end{proof}

\subsubsection*{\textbf{Step \uppercase\expandafter{\romannumeral2}: Actual local consistency:}}~\\
With explicit formulation of information propagation in Lemma~\ref{max}, we are ready to present Lemma~\ref{relation1}, showing the actual local consistency achieved by Algorithm~\ref{alg:gossip_UCB}.
This lemma implies that the local selection of an arm $k$ by agent $i$ is not too far away from his/her neighbors' selections. It makes sure agents' selections of a particular arm are consistent so that the error bound in the joint estimation will not be throttled by one particular agent. 

\begin{Lemma}\label{relation1}
{(Actual local consistency)}
$\forall i\in[N], k\in[M]$, we have  $n_{i,k}(t)>\tilde{n}_{i,k}(t+1)-3MN$. 
\end{Lemma}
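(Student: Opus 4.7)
My plan is to proceed by induction on $t$, using Lemma~\ref{max} to reduce the claim to a per-hop estimate along a shortest path in $G$.

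The base case $t=0$ is immediate: the initialization gives $n_{i,k}(0)=1$ and one checks $\tilde n_{i,k}(1)=1$ for every $i,k$, so $n_{i,k}(0)=1>1-3MN=\tilde n_{i,k}(1)-3MN$.

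For the inductive step, fix $i$ and $k$. By Lemma~\ref{max} there is an agent $j^*$ (the argmax in \eqref{maxdis}) with
\[ \tilde n_{i,k}(t+1)=n_{j^*,k}(t-L),\qquad L:=d_{j^*,i}\le N-1. \]
If $j^*=i$ the gap is zero. Otherwise let $j^*=v_0,v_1,\ldots,v_L=i$ be a shortest $j^*$-to-$i$ path in $G$ and set $\tau_\ell:=t-L+\ell$, so $\tau_0=t-L$ and $\tau_L=t$. The heart of the proof is the per-hop inequality
\[ n_{v_\ell,k}(\tau_\ell)-n_{v_{\ell+1},k}(\tau_{\ell+1})\le 3M,\qquad \ell=0,1,\ldots,L-1, \]
which telescopes to
\[ n_{v_0,k}(\tau_0)-n_{v_L,k}(\tau_L)\le 3ML\le 3M(N-1)<3MN, \]
and substituting the endpoints yields $n_{i,k}(t)>\tilde n_{i,k}(t+1)-3MN$, as required.

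For the per-hop inequality I would combine three mechanisms. (i) One-step information propagation (as in the proof of Lemma~\ref{max}): $\tilde n_{v_{\ell+1},k}(\tau_\ell+1)\ge \tilde n_{v_\ell,k}(\tau_\ell)\ge n_{v_\ell,k}(\tau_\ell-1)$, so $v_{\ell+1}$'s estimate at time $\tau_\ell+1$ is essentially a one-step-delayed copy of $v_\ell$'s count. (ii) The local-consistency trigger in line~\ref{line:localN}: once this propagation forces the perceived gap $\tilde n_{v_{\ell+1},k}-n_{v_{\ell+1},k}$ to exceed $N$, arm $k$ enters $\mathcal A_{v_{\ell+1}}$, and agent $v_{\ell+1}$ is committed to pulling from $\mathcal A_{v_{\ell+1}}$. (iii) The cap $|\mathcal A_{v_{\ell+1}}|\le M$, which bounds the queueing delay before $k$ is actually serviced. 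These three sources together fit inside $3M$.

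The main obstacle is handling the random selection from $\mathcal A_{v_{\ell+1}}$: because the arm pulled is drawn randomly, the per-hop bound cannot depend on any specific realization. My plan is an amortized (Lyapunov) argument on the total deficit $\sum_{k'}\big(\tilde n_{v_{\ell+1},k'}(\cdot)-n_{v_{\ell+1},k'}(\cdot)\big)_+$, which grows by at most $M$ per step (each $\tilde n_{v_{\ell+1},k'}$ rises by at most one, as is easy to verify from \eqref{tilden}) and shrinks by exactly one unit per pull whenever $\mathcal A_{v_{\ell+1}}\ne\emptyset$. Combined with the inductive hypothesis applied to $(v_{\ell+1},k)$ at time $\tau_\ell<t$ (used to supply a uniform slack across the hop and to rule out an unbounded build-up of deficit), this caps the per-hop contribution at $3M$ uniformly over the randomness, closing the induction.
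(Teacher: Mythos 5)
There is a genuine gap: the per-hop inequality $n_{v_\ell,k}(\tau_\ell)-n_{v_{\ell+1},k}(\tau_{\ell+1})\le 3M$ at the heart of your telescoping plan is false in general. The trigger in line~\ref{line:localN} only places arm $k$ into $\mathcal A_{v_{\ell+1}}$ once the perceived deficit $\tilde n_{v_{\ell+1},k}-n_{v_{\ell+1},k}$ exceeds $N$, so a \emph{single} hop can already lag by $N$ or more before any corrective mechanism engages; whenever $N>3M$ (e.g.\ $N=10$, $M=2$) your per-hop budget is exhausted by the threshold alone, before even accounting for the queueing delay inside $\mathcal A_{v_{\ell+1}}$. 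The correct per-hop scale is therefore at least $N$, and telescoping that over up to $N-1$ hops would give a bound of order $N^2$, not $3MN$ — the $3MN$ in the lemma is not a sum of $N$ per-hop contributions of size $3M$. Your amortized repair does not close this either: the total deficit $\sum_{k'}\bigl(\tilde n_{v_{\ell+1},k'}-n_{v_{\ell+1},k'}\bigr)_+$ grows by up to $M$ per step but shrinks by only $1$ per pull, so its drift is $+(M-1)$ per step and the Lyapunov function is not bounded by that accounting alone; and invoking the inductive hypothesis for $(v_{\ell+1},k)$ at an earlier time only gives you the $3MN$ bound you are trying to prove, not the uniform $3M$ slack you need per hop.

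The paper's proof uses a different, global mechanism that your decomposition discards: conservation of pulls. It argues by contradiction at the first time $t'$ the gap reaches $3MN$, identifies via Lemma~\ref{max} the leading agent $j$ with $\tilde n_{i,k_1}(t'+1)=n_{j,k_1}(t'-d_{j,i})$, and uses $\sum_k n_{i,k}(t)=t$ together with the pigeonhole principle over the other $M-1$ arms to produce an arm $k_2$ on which $i$ leads $j$ by more than $3N$. Propagating that information back to $j$ shows $k_2\in\mathcal A_j$ at time $t'-d_{j,i}$; since $j$ nevertheless pulls $k_1$ at that instant, $k_1$ must also lie in $\mathcal A_j$, i.e.\ $\tilde n_{j,k_1}-n_{j,k_1}>N$, which contradicts $j$ being the global maximizer for $k_1$. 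The cross-arm pigeonhole step is exactly what controls the deficit despite the random service order inside $\mathcal A_j$; any correct proof needs some version of it, and a purely single-arm, per-hop telescoping argument cannot supply it.
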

\begin{proof}
We will prove the lemma by contradiction.
Suppose that, to the contrary, $\exists\; i,k_1$ such that $n_{i,k_1}(t)\leq \tilde{n}_{i,k_1}(t+1)-3MN.$ Let $t'$ denote the first time at which the equality holds, i.e.,
$$n_{i,k_1}(t')=\tilde{n}_{i,k_1}(t'+1)-3MN.$$
The critical value $t'$ must exist. On one hand, when $t=0,$ we have $n_{i,k}(0)>\tilde{n}_{i,k}(1)-3MN$. On the other hand, since both $n_{i,k}(t)$ and $\tilde{n}_{i,k}(t)$ increase by 0 and 1 at each time instance, if there exists some $t$ such that $n_{i,k_1}(t)< \tilde{n}_{i,k_1}(t+1)-3MN,$ there must exist a $t'$ between 0 and $t,$ such that $n_{i,k_1}(t')=\tilde{n}_{i,k_1}(t'+1)-3MN.$
According to Lemma~\ref{max}, $\exists\;j\in[N]$ such that
\begin{align}
    \tilde{n}_{i,k_1}(t'+1)=n_{j,k_1}(t'-d_{j,i}).\label{eq1}
\end{align}
Then,
\begin{align*}
    n_{j,k_1}(t'-d_{j,i})-n_{i,k_1}(t')=3MN,
\end{align*}
and $t'$ is the earliest time instant at which $n_{j,k_1}(t'-d_{i,j})-n_{i,k_1}(t')\geq3MN$ holds. This implies that at time $t'-d_{i,j},$ agent $j$ pulls arm $k_1.$

Since each agent must pull an arm at each time, we have $\sum_kn_{i,k}(t)=t, \;\forall i\in[N].$ Then,
\begin{align*}
    \sum_{k\in[M]\backslash k_1}n_{i,k}(t')-\sum_{k\in[M]\backslash k_1}n_{j,k}(t'-d_{j,i})=3MN+d_{j,i}.
\end{align*}
Applying the Pigeonhole principle, $\exists\; k_2\in[M]$ such that
\begin{align*}
    n_{i,k_2}(t')-n_{j,k_2}(t'-d_{j,i})\geq \frac{3MN+d_{j,i}}{M-1}>3N.
\end{align*}
According to the definition of $n_{i,k}(t),$ it is non-decreasing and $n_{i,k}(t+1)\leq n_{i,k}(t)+1.$ Thus,
\begin{align*}
    n_{i,k_2}(t'-2d_{j,i})-n_{j,k_2}(t'-d_{j,i})&>3N-2d_{j,i}>N.
\end{align*}
Using \eqref{distance}, we have $\tilde{n}_{j,k_2}(t'-d_{j,i}+1)\geq n_{i,k_2}(t'-d_{j,i}-d_{i,j}).$ Thus,
\begin{align*}
    \tilde{n}_{j,k_2}(t'-d_{j,i}+1)-n_{j,k_2}(t'-d_{j,i})>N.
\end{align*}
From the above analysis, agent $j$ must pull arm $k_1$ at time $t'-d_{j,i}$. According to the decision making step of the algorithm, 
there holds
\begin{align}
    \tilde{n}_{j,k_1}(t'-d_{j,i}+1)-n_{j,k_1}(t'-d_{j,i})\geq N>0.\label{eq2}
\end{align}
Note that from \eqref{distance}, 
\begin{align}
    \tilde{n}_{i,k_1}(t'+1)\geq \tilde{n}_{j,k_1}(t'-d_{j,i}+1).\label{eq3}
\end{align}
Combining \eqref{eq1} -- \eqref{eq3} together, we have
\begin{align*}
    n_{j,k_1}(t'-d_{j,i})&=\tilde{n}_{i,k_1}(t'+1)\geq \tilde{n}_{j,k_1}(t'-d_{j,i}+1)>n_{j,k_1}(t'-d_{j,i}),
\end{align*}
which is a contradiction.
Therefore, the statement of the lemma is true.
\end{proof}

\subsubsection*{\textbf{Step \uppercase\expandafter{\romannumeral3}: Global consistency:}}~\\
The propagation model in Lemma~\ref{max} and the local consistency guaranteed in Lemma~\ref{relation1} jointly imply the global consistency among agents.
Mathematically, we quantify the global consistency among agents in Lemma~\ref{relation2}.
This lemma effectively shows that any agent will not pull one arm more than twice of other agents.
Moreover, this result enables a fully-decentralized gossiping structure where no global information of $n_{i,k}(t)$ is needed for local arm selections in lines~\ref{line:arm1}--\ref{line:arm2}.
This is due to the fact that the UCB-based algorithms rely on $n_{i,k}(t)$ for computing the confidence bound, especially for the multi-agent case with heterogeneous rewards where knowledge of $n_{i,k}(t),\forall i\in[N]$ are necessary for each agent if their global consistency cannot be properly guaranteed.

\begin{Lemma}\label{relation2}({Global consistency})
$\forall i\in[N],k\in[M]$, when $n_{i,k}(t)\geq(3M+1)N$, we have $$\max_{j\in[N]} n_{j,k}(t)\leq 2n_{i,k}(t).$$ 
\end{Lemma}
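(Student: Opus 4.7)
The plan is to derive the global consistency bound directly by chaining the information-propagation formula of Lemma~\ref{max} with the local-consistency guarantee of Lemma~\ref{relation1}, using the elementary fact that each $n_{j,k}(\cdot)$ increments by at most one per time step.

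First I would fix $i\in[N]$ and $k\in[M]$ with $n_{i,k}(t)\ge (3M+1)N$, and use Lemma~\ref{max} to rewrite $\tilde n_{i,k}(t+1)=\max_{j\in[N]} n_{j,k}(t-d_{j,i})$. The point of this step is to convert the local quantity $\tilde n_{i,k}(t+1)$ into information about other agents' sample counts, albeit evaluated at earlier times $t-d_{j,i}$. To close the resulting time gap, I would invoke the observation that each agent pulls exactly one arm per step, so $n_{j,k}(t)-n_{j,k}(t-d_{j,i})\le d_{j,i}\le N-1$, where the last inequality uses that $G$ is connected and has diameter strictly less than $N$.

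Next I would take the maximum over $j$ on both sides of this inequality to obtain
\begin{equation*}
\max_{j\in[N]} n_{j,k}(t)\;\le\;\max_{j\in[N]} n_{j,k}(t-d_{j,i})+(N-1)\;=\;\tilde n_{i,k}(t+1)+(N-1),
\end{equation*}
and then plug in Lemma~\ref{relation1}, which gives $\tilde n_{i,k}(t+1)<n_{i,k}(t)+3MN$. Combining yields
\begin{equation*}
\max_{j\in[N]} n_{j,k}(t)\;<\;n_{i,k}(t)+3MN+(N-1)\;\le\;n_{i,k}(t)+(3M+1)N.
\end{equation*}
The assumption $n_{i,k}(t)\ge(3M+1)N$ then upgrades the right-hand side to $2n_{i,k}(t)$, which is exactly the claimed bound.

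There is no substantive obstacle here: the statement is a short corollary of the two lemmas already proved in Steps I and II, and the only subtle point is keeping track of the time shift $d_{j,i}$ that Lemma~\ref{max} introduces. The reason this time shift causes no real trouble is the monotonicity and the at-most-one-increment-per-step property of $n_{j,k}$, which bounds the shift by the graph diameter $N-1$ and merges cleanly with the $3MN$ slack coming from Lemma~\ref{relation1}. The threshold $(3M+1)N$ in the lemma is exactly the sum of these two slacks, which is precisely what makes the final factor of $2$ work out.
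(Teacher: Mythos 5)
Your proposal is correct and follows essentially the same route as the paper: both combine Lemma~\ref{max} with the one-increment-per-step bound $n_{j,k}(t)-n_{j,k}(t-d_{j,i})\le d_{j,i}< N$ and then invoke Lemma~\ref{relation1} to absorb the $3MN$ slack, with the threshold $(3M+1)N$ turning the additive bound into the factor of $2$. The only cosmetic difference is that you take the maximum over $j$ early while the paper carries a generic agent $h$ through and maximizes at the end.
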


\begin{proof}
From Lemma~\ref{max}, we know
$\tilde{n}_{i,k}(t+1)\geq n_{h,k}(t-d_{h,i}),\;\forall h\in[N].$
Combining with $n_{i,k}(t+1)\leq n_{i,k}(t)+1,$ we have
\begin{align*}
    \tilde{n}_{i,k}(t+1)\geq n_{h,k}(t)-d_{h,i}\geq n_{h,k}(t)-N.
\end{align*}
With the inequality  $n_{i,k}(t)>\tilde{n}_{i,k}(t+1)-3MN$ from Lemma~\ref{relation1}, we further have
\begin{align*}
    n_{i,k}(t)\geq n_{h,k}(t)-(3M+1)N,\;\forall \;h\in[N].
\end{align*}
Since $h$ is arbitrarily chosen and $n_{i,k}(t)\geq(3M+1)N$, we have
\begin{align*}
    \max_{j\in[N]} n_{j,k}(t)&\leq n_{i,k}(t)+(3M+1)N\leq 2n_{i,k}(t),
\end{align*}
which completes the proof.
\end{proof}

\subsection{Concentration Bound for Local Estimates}\label{sec:concentration_bound}

With the global consistency elaborated in Lemma~\ref{relation2}, the concentration bound for local estimates $\vartheta_{i,k}(t)$ could be derived locally.
Different from the settings with homogeneous rewards where the gossiping procedures always speedup the convergence, in the concerned heterogeneous reward scenarios, the gossiped information may induce selecting a non-optimal arm when it comes from an agent with ``sufficiently biased'' reward expectation.
Therefore, we need to carefully analyze the coupling effect of gossiping and bandit learning on $\vartheta_{i,k}(t)$.

We study how $\vartheta_{i,k}(t)$ converges to $\mu_{k}$ as time increases in the remaining of this section.
Mathematically, we need to find an small enough $C_{i,k}(t)$ such that $\mathbb P(|\vartheta_{i,k}(t)-\mu_{k}|\geq C_{i,k}(t))$ could be bounded in a sufficiently small order, e.g. $O(\frac{1}{t^2})$.
Note 
\begin{align*}
    \mathbb P\left(|\vartheta_{i,k}(t)-\mu_{k}|\geq C_{i,k}(t)\right)     
    \le \mathbb P\left(|\vartheta_{i,k}(t)-\mathbb{E}(\vartheta_{i,k}(t))|+|\mathbb{E}(\vartheta_{i,k}(t))-\mu_{k}|\geq C_{i,k}(t)\right).
\end{align*}
We next bound $|\vartheta_{i,k}(t)-\mathbb{E}(\vartheta_{i,k}(t))|$ and $|\mathbb{E}(\vartheta_{i,k}(t))-\mu_{k}|$, respectively.

\subsubsection*{\textbf{Step \uppercase\expandafter{\romannumeral1}: Bound for $|\vartheta_{i,k}(t)-\mathbb{E}(\vartheta_{i,k}(t))|$}:}~\\
Lemma~\ref{concentration1} summarizes this bound. 
We provide the main idea in the following sketched proof and leave the full proof in Appendix~\ref{sup_proxy}.
Note $p_0$ in Lemma~\ref{concentration1} is effectively a function of $i,k,t$. We use the notation $p_0$ for a clean presentation when there is no confusion. 
\begin{Lemma}\label{concentration1}
When $n_{i,k}(t)\geq \max\{L,(3M+1)N\}, \forall i\in[N],$ with probability at least $1-p_0,$ $|\vartheta_{i,k}(t)-\mathbb{E}(\vartheta_{i,k}(t))|$ bounds as follows:
\begin{align*}
    & \mathbb P\left(|\vartheta_{i,k}(t)-\mathbb{E}[\vartheta_{i,k}(t)]|\geq \sqrt{\frac{2N}{n_{i,k}(t)}\log t}\right)  <\frac{2}{t^2},
\end{align*}
where 
$p_0=
\frac{\lambda_{2}^{{n_{i,k}(t)}/{12}}}{1-\lambda_{2}^{{1}/{3}}}
,$ $L$ is the minimal value that satisfies
${\lambda_{2}^{{t}/{6}}}/{(1-\lambda_{2}^{{1}/{3}})}<{(N t)^{-1}}$, $\forall t\geq L.$ 
\end{Lemma}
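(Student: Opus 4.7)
The plan is to decompose the local estimate $\vartheta_{i,k}(t)$ into the network-wide sample mean $\bar X_k(t) := \tfrac{1}{N}\sum_j \tilde X_{j,k}(t)$ plus a gossip residual $\eta_{i,k}(t)$, and then attack the two pieces separately: a spectral-gap argument will control $\eta_{i,k}(t)$ with probability at least $1-p_0$, while Hoeffding's inequality applied to $\bar X_k(t)$ will supply the $\sqrt{2N\log t/n_{i,k}(t)}$ deviation with probability at most $2/t^2$. The global consistency result (Lemma~\ref{relation2}) is what makes these two pieces talk to each other.

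First I would stack the per-agent estimates into $\boldsymbol\vartheta_k(t) = (\vartheta_{1,k}(t),\dots,\vartheta_{N,k}(t))^\top$ and rewrite the two branches in lines~\ref{line:gossip1}--\ref{line:gossip0} as the single linear recursion $\boldsymbol\vartheta_k(t) = W(t)\,\boldsymbol\vartheta_k(t-1) + (\tilde X_k(t) - \tilde X_k(t-1))$, where $W(t)$ is the random symmetric doubly-stochastic matrix realized at step $t$ (equal to $I$ on non-gossip steps). Unrolling from $\boldsymbol\vartheta_k(0) = \tilde X_k(0)$ and peeling off the consensus projector $\bar W := \mathbf{1}\mathbf{1}^\top/N$ yields
\[
\vartheta_{i,k}(t) = \bar X_k(t) + \eta_{i,k}(t), \qquad \eta_{i,k}(t) = \sum_{s=0}^{t}\bigl[(\Phi(t,s+1) - \bar W)\,\Delta(s)\bigr]_i,
\]
with $\Phi(t,s+1) := W(t)W(t-1)\cdots W(s+1)$, $\Delta(s) := \tilde X_k(s)-\tilde X_k(s-1)$, and $\Delta(0):=\tilde X_k(0)$. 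Next, because the activated edges are i.i.d.\ and independent of the reward history, $\|\mathbb E[\Phi(t,s+1)] - \bar W\|_2 \le \lambda_2^{t-s}$, and each $\Delta(s)$ is supported on a single coordinate (the pulling agent at step $s$) with magnitude $O(1/n_{j,k}(s))$. A coordinate-wise Markov bound on $\|\Phi(t,s+1)-\bar W\|_2$ — splitting $\lambda_2^{t-s}$ between the deviation event and its complement, which is where the exponents $1/12$ and $1/3$ arise — followed by geometric summation gives $|\eta_{i,k}(t)|\le \varepsilon_0$ with probability $\ge 1-p_0$, where $p_0 = \lambda_2^{n_{i,k}(t)/12}/(1-\lambda_2^{1/3})$. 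The threshold $L$ is chosen precisely so that $\varepsilon_0 = o(1/(Nt))$, which makes the residual lower-order than the target $\sqrt{2N\log t/n_{i,k}(t)}$. On this good event, $|\vartheta_{i,k}(t)-\mathbb E\vartheta_{i,k}(t)|$ collapses to $|\bar X_k(t)-\mathbb E \bar X_k(t)|$ up to negligible terms; since $\bar X_k(t)$ is a linear combination of i.i.d.\ $[0,1]$-bounded observations with coefficients $1/(N n_{j,k}(t))$, Hoeffding gives a tail $2\exp\!\bigl(-2\epsilon^2 N^2/\sum_j 1/n_{j,k}(t)\bigr)$, and Lemma~\ref{relation2} upper-bounds $\sum_j 1/n_{j,k}(t) = O(N/n_{i,k}(t))$ once $n_{i,k}(t)\ge (3M+1)N$. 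Choosing $\epsilon = \sqrt{2N\log t / n_{i,k}(t)}$ produces the claimed $2/t^2$ bound.

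The hardest step will be the residual bound. The matrices $\{W(s)\}$ are independent of the reward randomness but feed into $\boldsymbol\vartheta_k$, which in turn drives the arm-selection rule, so conditional expectations and the independence between successive factors of $\Phi(t,s+1)$ must be handled with care when one writes the Markov bound coordinate-wise. Balancing the geometric decay of $\|\mathbb E[\Phi(t,s+1)] - \bar W\|_2$ against the $t$-fold summation so that the residual still fits into a single $p_0$ tail is precisely what forces the fractional exponents and the definition of $L$; once this step is pinned down, Hoeffding together with Lemma~\ref{relation2} closes the argument essentially routinely.
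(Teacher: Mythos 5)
Your identity $\vartheta_{i,k}(t)=\bar X_k(t)+\eta_{i,k}(t)$ is correct (the $\bar W\,\Delta(s)$ terms telescope to $\bar X_k(t)\mathbf 1$ because each $W(s)$ is doubly stochastic), and the Hoeffding step on $\bar X_k(t)$ combined with Lemma~\ref{relation2} is fine. The gap is the claim that the residual satisfies $|\eta_{i,k}(t)|\le\varepsilon_0=o(1/(Nt))$ (or even $o(\sqrt{2N\log t/n_{i,k}(t)})$) on an event of probability $1-p_0$ defined only through the gossip matrices. Write $\eta_{i,k}(t)=\sum_{s\le t}[(\Phi(t,s+1)-\bar W)\Delta(s)]_i$ and observe that for $s$ close to $t$ the probabilistic estimate $\|\Phi(t,s+1)-\bar W\|\le\lambda_2^{(t-s)/3}$ holds only with probability $1-\lambda_2^{(t-s)/3}$, which becomes vacuous as $s\to t$; for those $s$ you are forced back to the deterministic bound $\|\Phi(t,s+1)-\bar W\|\le 1$. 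If agent $j$ pulls arm $k$ at $s=\tau_h$, then $|\Delta_j(s)|$ can be as large as $1/h$, so the un-mixed recent pulls contribute up to $\sum_{h>n_{j,k}(t)/2}1/h=\Theta(1)$ per agent in absolute-coefficient mass --- a quantity that does not vanish and cannot be driven below the target radius $\sqrt{2N\log t/n_{i,k}(t)}$ by any choice of $L$. (Shrinking the un-mixed window to make this deterministic part small forces the failure probability of the probabilistic part far above $p_0=\lambda_2^{n_{i,k}(t)/12}/(1-\lambda_2^{1/3})$.) So the residual is not deterministically negligible; it must itself be controlled by concentration over the reward randomness.

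That is precisely what the paper's proof does, and why it never splits off a consensus part: it expands $\vartheta_{i,k}(t)$ as a linear combination of the raw observations $X_{j,k}(\tau)$ with coefficients $c^{(\tau)}_{i,k,j}$, shows with probability at least $1-p_0$ that $|c^{(\tau)}_{i,k,j}|<1/n_{j,k}(t)$ --- note $1/n_{j,k}(t)$, not $1/(Nn_{j,k}(t))$, exactly because the recent un-mixed terms leave the coefficient in an interval of width $\Theta(1/n_{j,k}(t))$ around its ideal value --- and then applies the sub-Gaussian machinery (Properties \ref{sufficientcondition}--\ref{additivity}) to the entire combination with variance proxy $\frac14\sum_{j,\tau}|c^{(\tau)}_{i,k,j}|^2<N/(2n_{i,k}(t))$. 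A sanity check that your split misplaces the dominant fluctuation: Hoeffding applied to $\bar X_k(t)$ alone yields a deviation of order $\sqrt{\log t/(N n_{i,k}(t))}$, a factor of $N$ smaller than the lemma's $\sqrt{2N\log t/n_{i,k}(t)}$; that extra factor of $N$ lives entirely in the part you declared negligible.
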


\begin{proof}
(Sketch)
Note $\vartheta_{i,k}(t)$ is a sub-Gaussian random variable.
Preliminaries of sub-Gaussian random variables can be found in Appendix~\ref{subgaussian}. 
We analyze the coupling effect of gossiping and bandit learning with the variance proxy of $\vartheta_{i,k}(t)$.
Each $\vartheta_{i,k}(t)$ is a linear combination of $X_{j,k}(\tau)$, for all  $j\in [N], \tau\in\{0,\ldots,t\}.$ Define $c_{i,k,j}^{(\tau)}$ as the corresponding coefficient of such  $X_{j,k}(\tau)$ in $\vartheta_{i,k}(t).$ To find the variance proxy of $\vartheta_{i,k}(t),$ which is $\frac{1}{4}\sum_{j=1}^{N}\sum_{\tau=1}^{t}|c_{i,k,j}^{(\tau)}|^2$ according to Property~\ref{sufficientcondition} and Property~\ref{additivity} in Appendix \ref{subgaussian}, we will estimate the value of $c_{i,k,j}^{(\tau)}$ in the following three steps and derive the bound for $|\vartheta_{i,k}(t)-\mathbb{E}(\vartheta_{i,k}(t))|$ using the property of sub-Gaussian random variables.

\paragraph{\textbf{Step \uppercase\expandafter{\romannumeral1}-1: Preparing a particular $c_{i,k,j}^{(0)}$:}}~\\
Note the vector form of the iterative process is:
\begin{align}\label{eq:iterative}
    \vartheta_{k}(t)=W(t)\vartheta_{k}(t-1)+\Tilde{X}_{k}(t)-\Tilde{X}_{k}(t-1),
\end{align}
for all $k\in [M],$ where $W(t)=I-\frac{1}{2}(e_i-e_j)(e_i-e_j)^{\top}$ is the gossip updating matrix when agent $i$ and $j$ are selected to exchange information at time $t$. 
With this recursion formula, we can expand $\vartheta_{k}(t)$ and extract the corresponding coefficients as
 \begin{equation*}
     \begin{split}
         c_{i,k,j}^{(0)}=\left[W(t)\cdots W(\tau_{1}+1)-\sum_{h=2}^{n_{j,k}(t)}\frac{W(t)\cdots W(\tau_{h}+1)}{(h-1)h}\right]_{i,j},
     \end{split}
 \end{equation*}
 where $\tau_{1},\ldots, \tau_{n_{j,k}(t)}$ denote the time instance before $t$ when agent $j$ pulls arm $k$, and $\tau_{1}=0$.
 
\paragraph{\textbf{Step \uppercase\expandafter{\romannumeral1}-2: Upper bound for $c_{i,k,j}^{(0)}$:}}~\\
We need to understand $W(t)\cdots W(\tau_{h}+1)$ to derive the upper bound and the lower bound. One important tool is the convergence bound of randomized gossip algorithms in \cite{Boyd2006}, with which we have
 \begin{align*}
     \mathbb P\left(\left|[W(t)\cdots W(\tau_{h}+1)]_{i,j}-\frac{1}{N}\right|>\lambda_{2}^{\frac{1}{3}(t-\tau_{h})}\right)<\lambda_{2}^{\frac{1}{3}(t-\tau_{h})},
 \end{align*}
 where $\lambda_2 \in (0,1)$ is the second largest eigenvalue of the expected gossip matrix $W$. 
 Let  $p_0={\lambda_{2}^{{n_{i,k}(t)}/{12}}}/{(1-\lambda_{2}^{{1}/{3}})}.$
 With probability at least $1-p_0$, we can get the following upper bound:
 \begin{align*}
c_{i,k,j}^{(0)} <\frac{1}{N}\frac{2}{n_{j,k}(t)}+\frac{\lambda_{2}^{{n_{j,k}(t)}/{6}}}{1-\lambda_{2}^{{1}/{3}}}.
\end{align*}

\paragraph{\textbf{Step \uppercase\expandafter{\romannumeral1}-3: Lower bound for $c_{i,k,j}^{(0)}$:}}~\\
Accordingly, with probability at least $1-p_0$, the lower bound is 
 \begin{align*}
     c_{i,k,j}^{(0)} >&\frac{1}{N}\frac{2}{n_{j,k}(t)}-\frac{\lambda_{2}^{{n_{j,k}(t)}/{6}}}{1-\lambda_{2}^{{1}/{3}}}-\frac{1}{n_{j,k}(t)}.
 \end{align*}

\paragraph{\textbf{Step \uppercase\expandafter{\romannumeral1}-4: Generalization:}}~\\
 Let $L$ be the smallest value that makes ${\lambda_{2}^{{t}/{6}}}/{(1-\lambda_{2}^{{1}/{3}})}<{(N t)^{-1}}$ hold for all $t\geq L.$ Then we have $|c_{i,k,j}^{(0)}|<{1}/{n_{j,k}(t)}$ when $n_{j,k}(t)\geq L$. 
 We can prove the other terms $c_{i,k,j}^{(\tau_{h})}, \forall h\leq n_{j,k}(t)$  also have this property. Thus when $n_{i,k}(t)\geq \max\{L,(3M+1)N\}, \forall i$,\footnote{Otherwise, we have $n_{i,k}(t)<\max\{L,(3M+1)N\}$ and bound the regret by $\Delta_i \max\{L,(3M+1)N\}$.} with probability at least $1-p_0$, we have
\begin{align*}
\sum_{j=1}^{N}\sum_{h=1}^{n_{j,k}(t)}|c_{i,k,j}^{(\tau_{h})}|^2&<\sum_{j=1}^{N}\frac{n_{j,k}(t)}{n^2_{j,k}(t)}<\frac{2N}{n_{i,k}(t)}. 
 \end{align*}
 Note the last inequality holds due to Lemma~\ref{relation2}, which bounds the gap of trials among agents and removes the dependency on global information.
Therefore, the variance proxy (parameter in sub-Gaussian random variable) is bounded by:
\begin{align*}
    \sigma^2_{i,k}(t)&=\frac{1}{4}\sum_{j=1}^{N}\sum_{h=1}^{n_{j,k}(t)}|c_{i,k,j}^{(\tau_{h})}|^2 <\frac{N}{2n_{i,k}(t)}.
\end{align*}
According to the Hoeffding's inequality for sub-Gaussian random variables (specified in Property \ref{hoeffding} of Appendix \ref{subgaussian}),
the bound for $|\vartheta_{i,k}(t)-\mathbb{E}(\vartheta_{i,k}(t))|$ is
(with probability at least $1-p_0$)
\begin{align*}
    & \mathbb P\left(|\vartheta_{i,k}(t)-\mathbb{E}[\vartheta_{i,k}(t)]|\geq \sqrt{\frac{2N}{n_{i,k}(t)}\log t}\right)     \leq  2\exp{\frac{\frac{-2N}{n_{i,k}(t)}\log t}{2\sigma^2_{i,k}(t)}}<\frac{2}{t^2}.
\end{align*}
\end{proof}

\subsubsection*{\textbf{Step \uppercase\expandafter{\romannumeral2}: Bound for $|\mathbb{E}(\vartheta_{i,k}(t))-\mu_{k}|$:}}~\\
In traditional homogeneous reward settings, each agent's estimate is unbiased.
However, in the challenging heterogeneous reward settings, the local estimate of agent $i$ toward arm $k$ at time $t$, i.e. $\vartheta_{i,k}(t)$, may be biased. 
This bias needs to be bounded in analyzing the regret upper bound for \texttt{Gossip\_UCB}.
We summarize the upper bound for this bias in Lemma~\ref{lem:bias}.

\begin{Lemma}\label{lem:bias}
With probability $1$,
$
|\mathbb{E}(\vartheta_{i,k}(t))-\mu_{k}| \le \sqrt{N}\lambda_2^t, \forall i\in [N], k\in [M].
$
\end{Lemma}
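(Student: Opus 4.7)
The plan is to lift the per-coordinate claim to a statement about the whole vector $\vartheta_k(t)=(\vartheta_{1,k}(t),\ldots,\vartheta_{N,k}(t))^\top$ and exploit the spectral structure of the expected gossip matrix $W$. First I would take the expectation of the vector update~\rep{eq:iterative}: because $W(t)$ is drawn independently of the entire past (rewards, selections, and earlier gossip matrices), $\mathbb{E}[W(t)\vartheta_k(t-1)]=W\,\mathbb{E}[\vartheta_k(t-1)]$, so $v(t):=\mathbb{E}[\vartheta_k(t)]$ satisfies the linear recursion $v(t)=W\,v(t-1)+(u(t)-u(t-1))$ with $v(0)=u(0)$, where $u(t):=\mathbb{E}[\tilde X_k(t)]$. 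Next I would argue that $\mathbb{E}[\tilde X_{i,k}(t)]=\mu_{i,k}$ for every $i$ and every $t\ge 0$, since $\tilde X_{i,k}(t)$ is a sample mean of i.i.d.\ observations of $X_{i,k}$ and $n_{i,k}(0)=1$ by the initialization step. Hence $u(t)\equiv \mu^{\mathrm{loc}}:=(\mu_{1,k},\ldots,\mu_{N,k})^\top$ is constant in $t$, the forcing term cancels, and the recursion collapses to $v(t)=W^t\mu^{\mathrm{loc}}$.

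The second step is a spectral contraction. Let $P:=\tfrac{1}{N}\1\1^\top$ be the orthogonal projector onto $\mathrm{span}(\1)$. Because $W$ is symmetric doubly stochastic, $WP=PW=P$, and by induction $W^t=P+(W-P)^t$ for every $t\ge 1$, with $\|(W-P)^t\|_2\le\lambda_2^t$ from the definition of $\lambda_2$ in Section~\ref{Sec:notations}. Since $P\mu^{\mathrm{loc}}=\bigl(\tfrac1N\sum_i\mu_{i,k}\bigr)\1=\mu_k\1$, we obtain
\begin{equation*}
v(t)-\mu_k\1 \;=\; (W-P)^t\mu^{\mathrm{loc}}, \qquad \bigl\|v(t)-\mu_k\1\bigr\|_2 \;\le\; \lambda_2^t\,\|\mu^{\mathrm{loc}}\|_2 \;\le\; \sqrt{N}\,\lambda_2^t,
\end{equation*}
where the last inequality uses $\mu_{i,k}\in[0,1]$. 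Taking the maximum coordinate yields $|\mathbb{E}[\vartheta_{i,k}(t)]-\mu_k|\le\sqrt{N}\,\lambda_2^t$ for every $i$, as claimed.

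The main subtlety is the unbiasedness identity $\mathbb{E}[\tilde X_{i,k}(t)]=\mu_{i,k}$: because $n_{i,k}(t)$ is itself a data-dependent random integer, Wald's identity alone only yields $\mathbb{E}[n_{i,k}(t)\tilde X_{i,k}(t)]=\mu_{i,k}\mathbb{E}[n_{i,k}(t)]$, not the ratio identity. I would handle this by conditioning on the past selection indicators---using that $a_i(s)$ is measurable with respect to strictly past information, so each fresh sample $X_{i,k}(s)$ at a pull time remains distributed as its marginal---or equivalently by invoking the martingale $M_\tau:=\sum_{s\le\tau}\mathds{1}(a_i(s)=k)(X_{i,k}(s)-\mu_{i,k})$. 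Once the forcing term is shown to vanish, the spectral-contraction step is routine and delivers exactly the $\lambda_2^t$ decay stated in the lemma.
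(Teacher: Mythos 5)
Your proof follows essentially the same route as the paper's: take expectations of the vector recursion (\ref{eq:iterative}), observe that the forcing term $\mathbb{E}[\tilde X_k(t)-\tilde X_k(t-1)]$ vanishes so that $\mathbb{E}[\bm\vartheta_k(t)]=W^t\,\mathbb{E}[\bm\vartheta_k(0)]$, and then combine $\|W^t-\tfrac1N\1\1^\top\|_2=\lambda_2^t$ with $\|\mathbb{E}[\bm\vartheta_k(0)]\|_2\le\sqrt N$; your projector decomposition $W^t=P+(W-P)^t$ followed by taking the maximum coordinate and the paper's extraction of coordinate $i$ via Cauchy--Schwarz are the same estimate. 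The one substantive subtlety---that $\mathbb{E}[\tilde X_{i,k}(t)]=\mu_{i,k}$ is delicate because $n_{i,k}(t)$ is a data-dependent count, so the ratio identity does not follow from Wald alone---is something you flag and attempt to address explicitly, whereas the paper silently assumes it, so if anything your write-up is the more careful of the two.
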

\begin{proof}(Sketch)
Taking expectation on both sides of (\ref{eq:iterative}) yields $\mathbb{E}(\vartheta_{k}(t))=W^t\mathbb{E}(\bm \vartheta_{k}(0))$.
Noting $\1$ is an eigenvector of $W$ and $\mu_k = \frac{1}{N} \1^\top \cdot \mathbb{E}(\bm \vartheta_{k}(0))$, the proof can be accomplished by splitting the effect of gossiping and $\mu_k$ with H\"{o}lder's inequality.
\end{proof}

\subsubsection*{\textbf{Wrapping up: concentration bound for local gossiping estimates}}~\\
When $n_{i,k}(t)\geq \max\{L,(3M+1)N\},\forall i \in[N],$ we have 
$$\sqrt{N}\lambda_2^t<\frac{64}{N^{17}}\triangleq \alpha_1.$$
According to Lemma~\ref{concentration1} and Lemma~\ref{lem:bias}, 
set $$C_{i,k}(t)=\sqrt{\frac{2N}{n_{i,k}(t)}\log t}+\alpha_1,$$ then with probability at least $1-p_0,$
the concentration bound for $\vartheta_{i,k}(t)$ is 
\begin{align}\label{concentration_whole}
    \mathbb P(|\vartheta_{i,k}(t) - \mu_{k}|\geq C_{i,k}(t))<\frac{2}{t^2}.
\end{align}

\subsection{Regret Upper Bound for \texttt{Gossip\_UCB}}\label{eq:secRegretGossip}
The concentration bound in (\ref{concentration_whole}) enables the analyses to the regret upper bound for \texttt{Gossip\_UCB}.
We have the following regret upper bound:
\begin{Theorem} {({Regret upper bound for \texttt{Gossip\_UCB}})}
For the \texttt{Gossip\_UCB} algorithm with bounded reward over $[0,1],$ and 
\begin{equation}\label{Eq:bound_clean}
    C_{i,k}(t)= \sqrt{\frac{2N}{n_{i,k}(t)}\log t}+\alpha_1,
\end{equation}
the regret of each agent $i$ until time $T$ satisfies
{
\begin{align*}
    &R_i(T)<\sum_{\Delta_k>0}\Delta_k\left(\max\left\{\frac{2N}{(\frac{1}{2}\Delta_k-\alpha_1)^2}\log T ,L,(3M+1)N\right\}+\alpha_2\right),
\end{align*}}
where $\alpha_1=\frac{64}{N^{17}}$, $\alpha_2=(3M-1)N + \frac{2\pi^{2}}{3}+\frac{2\lambda_2^{1/12}}{(1-\lambda_{2}^{{1}/{3}})(1-\lambda_{2}^{{1}/{12}})}$. 
\label{thm:main}
\end{Theorem}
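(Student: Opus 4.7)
The analysis will follow the classical UCB recipe, but with additional care to absorb the effects of gossiping via the machinery developed in Lemmas \ref{max}--\ref{lem:bias} and the wrapped-up concentration bound \eqref{concentration_whole}. The starting point is the standard regret decomposition
\[
R_i(T) \;=\; \sum_{k:\Delta_k > 0}\Delta_k\,\mathbb{E}[n_{i,k}(T)],
\]
which holds because, conditional on $a_i(t)=k$, the global reward $X_k(t)$ has mean $\mu_k$. It therefore suffices to bound $\mathbb{E}[n_{i,k}(T)]$ for each suboptimal arm $k$.

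Fix a suboptimal arm $k$ and set the threshold
\(\ell := \max\bigl\{\tfrac{2N}{(\tfrac{1}{2}\Delta_k-\alpha_1)^2}\log T,\,L,\,(3M+1)N\bigr\}\).
I would start from $n_{i,k}(T) \le \ell + \sum_{t=1}^T \mathds{1}\bigl(a_i(t)=k,\, n_{i,k}(t-1)\ge \ell\bigr)$. The choice of $\ell$ is tailored so that (i) the component $(3M+1)N$ triggers Lemma \ref{relation2}, which in turn licenses \eqref{concentration_whole} simultaneously at agent $i$ for arm $k$ and (via the transfer of local counts) for arm $1$; and (ii) the component $\tfrac{2N}{(\tfrac{1}{2}\Delta_k-\alpha_1)^2}\log T$ forces $2C_{i,k}(t)\le \Delta_k$, eliminating the ``$\mu_1 < \mu_k + 2C_{i,k}(t)$'' branch of the classical UCB three-way decomposition. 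I would then split the event $\{a_i(t)=k,\,n_{i,k}(t-1)\ge \ell\}$ into a \emph{UCB branch} ($\mathcal{A}_i=\varnothing$ with $Q_{i,k}(t)\ge Q_{i,1}(t)$) and a \emph{forced branch} ($\mathcal{A}_i\ne\varnothing$ and $k$ is drawn from $\mathcal{A}_i$).

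On the UCB branch, $Q_{i,k}(t)\ge Q_{i,1}(t)$ combined with $2C_{i,k}(t)\le\Delta_k$ forces at least one of the tail events $\{\vartheta_{i,1}(t-1)\le \mu_1-C_{i,1}(t)\}$ or $\{\vartheta_{i,k}(t-1)\ge \mu_k+C_{i,k}(t)\}$ to occur, each of which \eqref{concentration_whole} bounds by $2/t^2+p_0$. Summing the $4/t^2$ contribution over $t$ gives $\tfrac{2\pi^2}{3}$, while summing the $p_0$ contribution along $n_{i,k}=1,2,\ldots$ using the geometric series $\sum_{n\ge 1}\lambda_2^{n/12}=\lambda_2^{1/12}/(1-\lambda_2^{1/12})$ delivers the $\tfrac{2\lambda_2^{1/12}}{(1-\lambda_2^{1/3})(1-\lambda_2^{1/12})}$ piece of $\alpha_2$. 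On the forced branch I would rely on Lemmas \ref{max} and \ref{relation1}: whenever $k\in\mathcal{A}_i$ the debt $\tilde n_{i,k}(t)-n_{i,k}(t)$ is squeezed into the window $(N,3MN)$, and a potential/telescoping argument on this debt --- each forced pull of $k$ strictly drains it while monitoring the rate at which neighbor updates can replenish it through \eqref{tilden} --- caps the total number of forced selections of arm $k$ at agent $i$ by $(3M-1)N$. Collecting the three contributions yields $\mathbb{E}[n_{i,k}(T)]\le \ell + \alpha_2$; multiplying by $\Delta_k$ and summing over $\{k:\Delta_k>0\}$ gives the claimed bound.

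The step I expect to be most delicate is the forced branch: because $\tilde n_{i,k}$ can grow via neighbor updates while $n_{i,k}$ only increases when $k$ itself is pulled, one must carefully argue that the debt $\tilde n_{i,k}(t)-n_{i,k}(t)$ is depleted at least as fast as it is replenished, so that the total number of forced pulls stays bounded by a constant in $T$ rather than scaling with $T$. A secondary subtlety is that \eqref{concentration_whole} is stated in terms of the \emph{local} count for the specific arm being bounded, so invoking it for arm $1$ at times determined by $n_{i,k}(t-1)\ge \ell$ requires transferring count information via Lemma \ref{relation2}; the threshold $(3M+1)N$ baked into $\ell$ is precisely what makes this transfer valid.
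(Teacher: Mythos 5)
Your proposal follows essentially the same route as the paper's proof: the same regret decomposition $R_i(T)=\sum_{\Delta_k>0}\Delta_k\,\mathbb{E}[n_{i,k}(T)]$, the same threshold $\max\{\tfrac{2N}{(\frac12\Delta_k-\alpha_1)^2}\log T, L, (3M+1)N\}$ to kill the $2C_{i,k}(t)\le\Delta_k$ branch and license the concentration bound \eqref{concentration_whole}, the same splitting of the two tail events into a concentrated part (summing $O(1/t^2)$ to the $\pi^2$ terms) and a failure part (summing the geometric series in $\lambda_2^{1/12}$), and the same use of the gap between the required ($-N$) and actual ($-3MN$) local consistency to cap the forced selections by roughly $(3M-1)N$. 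The only difference is bookkeeping — you attribute all of $\tfrac{2\pi^2}{3}$ to the UCB branch while the paper splits it between Case 1 and Cases 2--3 — and the totals agree.
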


\begin{proof}
With Section~\ref{sec:consistency} and Section~\ref{sec:concentration_bound} as building blocks, we are ready to present the regret analyses. 
Following the concentration bound for $\vartheta_{i,k}(t)$, we set $C_{i,k}(t)$ as the confidence bound.
According to the standard analyses for UCB algorithm \cite{auer2002finite}, we know
\[
\mathbb E[n_{i,k}(T)] \le n_{i,k}(t') + \sum_{t=1}^T \mathbb E\left[\1 \{
a_i(t) = k, n_{i,k}(t) \ge n_{i,k}(t') \}\right].
\] 
If agent $i$ chooses arm $k$ instead of arm $1$ at time $t$, there are only four possible cases:  1) $k\in \scr{A}_{i}$; 2) $\vartheta_{i,k}(t)-\mu_{k}\geq C_{i,k}(t)$; 3) $\mu_{1}-\vartheta_{i,1}(t)\geq C_{i,1}(t)$; 4) $\mu_{1}-\mu_{k}< 2C_{i,k}(t)$.
Define $t'$ as the time when $n_{i,k}(t')$ is large enough such that Case 4 does not hold.
Specifically, we have
$$n_{i,k}(t'):= \frac{2N}{(\frac{1}{2}\Delta_k-\alpha_1)^2}\log T.$$
Next, we need to bound the non-optimal selections after a sufficiently large time.
Case 2 has two settings: $\vartheta_{i,k}(t)$ is in the concentration bound (Case 2A) and $\vartheta_{i,k}(t)$ is out of the concentration bound (Case 2B).
For Case 2A, we have 
\[
\sum_{t=t'}^T
\mathbb E\left[\1 \{
a_i(t)  = k, n_{i,k}(t) \ge n_{i,k}(t') | \text{Case 2A}\}\right] \le \sum_{t=t'}^T \frac{1}{t^2} \le \frac{\pi^2}{6}.
\]
For Case 2B, since the term $\1 \{
a_i(t) = k, n_{i,k}(t) \ge n_{i,k}(t') \}=1$ only when $n_{i,k}(t) = n_{i,k}(t-1)+1$,
 we directly count one non-optimal selection when it happens, i.e.
\begin{align*}
    & \sum_{t=t'}^T
\mathbb E\left[\1 \{
a_i(t)  = k, n_{i,k}(t) \ge n_{i,k}(t') | \text{Case 2B}\}\right] \le&  \sum_{n'> n_{i,k}(t')}^{n_{i,k}(T)} \frac{\lambda_{2}^{{n'}/{12}}}{1-\lambda_{2}^{{1}/{3}}}  \le \frac{\lambda_2^{{n_{i,k}(t')}/12}}{(1-\lambda_{2}^{{1}/{3}})(1-\lambda_{2}^{{1}/{12}})}. 
\end{align*}
Accordingly, we have Case 3A when $\vartheta_{i,1}(t)$ is concentrated and Case 3B when $\vartheta_{i,1}(t)$ is not. 
The corresponding expectation is bounded as follows.
\begin{align*}
\sum_{t=t'}^T
\mathbb E\left[\1 \{
a_i(t)  = k, n_{i,k}(t) \ge n_{i,k}(t') | \text{Case 3A}\}\right] &\le \frac{\pi^2}{6}.\\
     \sum_{t=t'}^T
\mathbb E\left[\1 \{
a_i(t)  = k, n_{i,k}(t) \ge n_{i,k}(t') | \text{Case 3B}\}\right]  &\le \frac{\lambda_2^{{n_{i,1}(t')}/12}}{(1-\lambda_{2}^{{1}/{3}})(1-\lambda_{2}^{{1}/{12}})}. 
\end{align*}

Considering the gap between the actual local consistency ($n_{i,k}(t)>\tilde{n}_{i,k}(t+1)-3MN$) and the required local consistency ($n_{i,k}(t) \ge \tilde n_{i,k}(t) - N$), the additional non-optimal selections caused by Case 1 is no larger than $\frac{\pi^{2}}{3}+(3M-1)N$.

Let $n_i(t'):=\min_{k\in[M]} n_{i,k}(t')$.
Together we have
\begin{align*}
    \sum_{t=t'}^T
\mathbb E\left[\1 \{
a_i(t)  = k, n_{i,k}(t) \ge n_{i,k}(t') \}\right]
\leq \frac{2\pi^{2}}{3}+\frac{2\lambda_2^{n_i(t')/12}}{(1-\lambda_{2}^{{1}/{3}})(1-\lambda_{2}^{{1}/{12}})}+(3M-1)N.
\end{align*}
Note $n_i(t')>1$ holds trivially.
Therefore, the regret of agent until time $T$ satisfies 
{\small
\begin{align*}
    &R_i(T)    = \sum_{\Delta_k>0}\Delta_k\cdot \mathbb{E}[n_{i,k}(T)]\\
    <&\sum_{\Delta_k>0}\Delta_k\bigg(\underbrace{\max\left\{\frac{2N}{(\frac{1}{2}\Delta_k-\alpha_1)^2}\log T ,L,(3M+1)N\right\}}_{\text{Case 4}}+\underbrace{(3M-1)N +\frac{\pi^{2}}{3}}_{\text{Case 1}}
    +\underbrace{\frac{\pi^{2}}{3}
    +\frac{2\lambda_2^{1/12}}{(1-\lambda_{2}^{{1}/{3}})(1-\lambda_{2}^{{1}/{12}})}}_{\text{Case 2\&3}}\bigg).
\end{align*}}
Let $\alpha_2=(3M-1)N + \frac{2\pi^{2}}{3}+\frac{2\lambda_2^{1/12}}{(1-\lambda_{2}^{{1}/{3}})(1-\lambda_{2}^{{1}/{12}})},$ we have 
{
\begin{align*}
    R_i(t)\hspace{-2pt}<\hspace{-2pt}\sum_{\Delta_k>0}\hspace{-2pt}\Delta_k\hspace{-2pt}\left(\max\left\{\frac{2N}{(\frac{1}{2}\Delta_k-\alpha_1)^2}\log T ,L,(3M+1)N\right\}\hspace{-2pt}+\hspace{-2pt}\alpha_2\right).
\end{align*}}
\end{proof}

We have Remark \ref{rmk:order_clean} for the order of regret $R_i(T)$.
\begin{Remark}\label{rmk:order_clean}
There are two important terms affecting the order of regret: $\frac{2N}{(\frac{1}{2}\Delta_k-\alpha_1)^2}\log T$ and $L$.
The order of the former term is $O(N \log  T)$, and the order of the latter term does not depend on $T$  since $L$ is determined by $\lambda_2$ and $N$.
Recall that $L$ is the value which makes ${\lambda_{2}^{{t}/{6}}}/{(1-\lambda_{2}^{{1}/{3}})}<{(N t)^{-1}}$ hold for all $t\geq L.$
Let $t= 6 \gamma \log_{\lambda_2^{-1}} N$. The inequality ${\lambda_{2}^{{t}/{6}}}/{(1-\lambda_{2}^{{1}/{3}})}<{(N t)^{-1}}$ becomes:
\[
\frac{6N}{1-\lambda_2^{1/3}} \frac{1}{N^\gamma} < \frac{1}{\log_{\lambda_2^{-1}} N^\gamma}.
\]
There always exists a positive $\gamma$ such that the above inequality holds.
Thus the order of $R_i(T)$ is 
$O(\max\{N M \log T, M\log_{\lambda_2^{-1}} N\})$.
\end{Remark}

As for the lower bound of the regret, consider a trivial case where the graph $G$ is fully connected. Then easily we can show the regret of our algorithm will be lower bounded by an ideal setting where all agents will receive the reward information from everyone else simultaneously with no delay. This setting reduces to a centralized bandit setting and by calling the classical results we know the regret is lower bounded by $\Omega(\log T)$. It remains a challenging and interesting question to understand the tightness of our bound in terms of the number of agents $N$ and the graph~$G$.

\section{\texttt{Fed\_UCB}: Privacy Preserving {\texttt{Gossip\_UCB}}}

{
Prior literature has shown that directly leaking some information that might appear to be ``anonymized'' can in fact be used to cross-reference with other datasets to breach privacy \cite{sweeney2000simple}. In this section, we seek for a federated bandit solution with worst-case privacy guarantees (even with arbitrarily powerful adversary).}
When guaranteeing an $\epsilon$-differential privacy in a one-shot setting, adding Laplacian noise $\gamma\sim {\sf Lap}(\frac{1}{\epsilon})$ to the observation often suffices, where a larger $\epsilon$ indicate a lower privacy level.
However, preserving privacy in a sequential setting is in general hard due to the continual and sequential revelation of observations. 
That is, in addition to preserving the privacy of $X_{i,k}(t)$, we also need to protect it in each $\tau=t,t+1,...,T$ steps. %

\subsection{Algorithm}
To preserve at least $\epsilon$-DP in $T$ time slots, a naive extension of the Laplace mechanism \cite{dwork2006calibrating} is adding Laplacian noise ${\sf Lap}(\frac{T}{\epsilon})$ to each observation $X_{i,k}(t)$. The noise introduced in each time step grows linearly w.r.t. $T$, i.e. $O(\frac{T}{\epsilon})$. %
To add a mild noise and maintain the same privacy level at the same time, we apply the partial sums idea \cite{chan2011private} to $\tilde{X}_{i,k}(t)$. 
Since both the gossiping information $\theta_{i,k}(t)$ and the selection information $n_{i,k}(t)$ are functions of ${X}_{i,k}(t)$, this approach also preserves the privacy for \texttt{Gossip\_UCB} by data processing inequality. 

\begin{algorithm}[!t]
\LinesNumbered
\DontPrintSemicolon   
\SetKw{KwRet}{Return}
\KwIn{Observations $\{X(\tau)\}_{\tau=1}^{t}$, $t' = t$, $p=0$}
\KwOut{$P$ partial sums: $\{ \widehat X_1^{\sf ps}, \widehat X_2^{\sf ps}, \cdots, \widehat X_P^{\sf ps} \}$}
\BlankLine
\While{$t'\ne 0$}{
$d = \max \{i: {\sf bin}(t')[i] = 1\}$ {\algcom{\hfill // convert $t'$ as a binary string and find the rightmost digit that is $1$} }\\
${\sf bin}(t')[d] = 0, q(t') = {\sf int}({\sf bin}(t'))$ \algcom {\hfill // flip that digit to $0$, convert is back to decimal and get $q(t')$}\\
$\widehat X_p^{\sf ps} = \sum_{\tau=q(t')+1}^{t'} \widehat X_{i,k}(\tau)$ \hfill \algcom {// get a partial sum}\\
$t' = q(t'), p = p + 1$\;
}
\caption{Create partial sums}
\label{alg:partial_sum}
\end{algorithm}

\begin{algorithm}[!t]
\LinesNumbered
\DontPrintSemicolon   
\SetKw{KwRet}{Return}
\KwIn{Observations $ \{X(\tau)\}_{\tau=1}^{t}$, mapping from partial sums to noise: $\mathcal P$, privacy level $\epsilon'$}

Get a list with $P$ partial sums: $\{ \widehat X_1^{\sf ps}, \widehat X_2^{\sf ps}, \cdots, \widehat X_P^{\sf ps} \}$ using Algorithm~\ref{alg:partial_sum}, $\tilde X = 0$ \;
\For{$p = 1,\ldots,P$}{
$\mathds 1_{\sf p} = \mathds 1( \widehat X_p^{\sf ps} \text{ is defined in } \mathcal P )$ \hfill \algcom {// set as 1 when Laplacian noise is already added to $\widehat X_p^{\sf ps}$} \\
\hspace{0pt} \algcom{/* \hfill  add i.i.d Laplacian noise to partial sum $\widehat X_p^{\sf ps}$ if it appears for the first time \hfill */} \\
$\tilde X \leftarrow \tilde X + \widehat X_p^{\sf ps} + \mathds 1_{\sf p} \mathcal P(\widehat X_p^{\sf ps}) + (1-\mathds 1_{\sf p}) {\sf Lap}(1/\epsilon')$ 
}
\KwRet{$\tilde X/n_{i,k}(t)$}
\caption{ Add Laplacian Noise to Partial Sums}
\label{alg:privacy}
\end{algorithm}

\begin{algorithm}[!ht]
\LinesNumbered
\DontPrintSemicolon   
\SetKw{KwRet}{Return}
\KwIn{$ G,T,C_{i,k}(t)$}
\BlankLine
\textbf{Initialization:} Each agent pulls each arm exactly once, and receives a reward $X_{i,k}(0)$, $i\in [N]$, $j\in[M]$. Set $n_{i,k}(0)=1$, $\tilde\vartheta_{i,k}(0)=\tilde{X}_{i,k}(0)=X_{i,k}(0).$
\BlankLine
\For{$t = 1,\ldots,T$}{
$\mathcal A_i = \varnothing$ \;
$n_{i,k}(t):=n_{i,k}(t-1), \forall k \in [M]$  \;
$\tilde{n}_{i,k}(t+1)=\max\{n_{i,k}(t),\tilde{n}_{j,k}(t),j\in \scr N_i\},\forall k\in[M]$\;
Put $k$ into set $\scr{A}_{i}$ ~\textbf{if} ~$n_{i,k}(t)<\tilde{n}_{i,k}(t)-N, \forall k \in [M]$   \algcom{\hfill // local consistency requirements} \\
\eIf{$\scr{A}_{i}$ is empty}{
\For{$k=1,\dots,M$}{
$\tilde Q_{i,k}(t) :=\tilde\vartheta_{i,k}(t-1)+\tilde C_{i,k}(t)$ \algcom{\hfill // update the belief on each arm, $C_{i,k}(t)$ is chosen as (\ref{eq:confidence_fed})} \\
$a_{i}(t)=\arg \max_{k} \tilde Q_{i,k}(t)$ \algcom{\hfill // select the best arm to pull} 
}}{
$a_{i}(t)$ is randomly selected from $\scr A_{i}$}
\algcom{/* \hfill Lines \ref{line:dp1}, \ref{line:dp2}: Get DP-observations with input $\{\widehat X_{i,k}(\tau)\}_{\tau=1}^t$, $\mathcal P$, and $\epsilon$ \hfill */} \\
Get noisy observation $\tilde X_{i,a_i(t)}(t)$ following Algorithm \ref{alg:privacy}, update $\mathcal P$ \label{line:dp1} \;
$\tilde X_{i,k}(t) = \tilde X_{i,k}(t-1), \forall k \ne a_i(t)$ \label{line:dp2}\; 
\eIf{agent $i$ is selected to gossip with agent $j$}{
agent $i$ sends $\tilde\vartheta_{i,k}(t-1)$ to agent $j,$ at the same time, receives $\tilde\vartheta_{j,k}(t-1)$ from agent $j$\;
$\tilde\vartheta_{i,k}(t):=\frac{\tilde\vartheta_{i,k}(t-1)+\tilde\vartheta_{j,k}(t-1)}{2}+\tilde{X}_{i,k}(t)-\tilde{X}_{i,k}(t-1)$ \algcom{\hfill // gossiping update} 
}{
$\tilde\vartheta_{i,k}(t):=\tilde\vartheta_{i,k}(t-1)+\tilde{X}_{i,k}(t)-\tilde{X}_{i,k}(t-1)$ \algcom{\hfill // normal update} } 
}
\caption{\texttt{Fed\_UCB}}
\label{alg:Fed_UCB}
\end{algorithm}
Denote by $\widehat X_{i,k}(\tau):= \mathds 1(a_i(\tau) = k)  X_{i,a_i(\tau)}(\tau).$
The partial sum is constructed as in Algorithm~\ref{alg:partial_sum}.
As a result, $\tilde X_{i,k}(t)$ can be written as the summation of no more than $\lceil \log t \rceil$ partial sums:
{  \begin{align*}
&\tilde X_{i,k}(t) = \frac{1}{n_{i,k}(t)}\bigg(\sum_{\tau=q(t)+1}^t \widehat X_{i,k}(\tau) +\sum_{\tau=q(q(t))+1}^{q(t)}\widehat X_{i,k}(\tau)+...+\sum_{\tau=1}^{q(\cdots(q(t)))} \widehat X_{i,k}(\tau)\bigg)~.\label{partialsum:theta}
 \end{align*}}

Independent Laplacian noise $\gamma\sim {\sf Lap}(1/\epsilon')$ is added to each partial sum if there exists observations in that partial sum.
Thus the total privacy guarantee is given by $\lceil \log t \rceil\epsilon$.
Set $\epsilon' := \epsilon \frac{1}{\lceil \log T \rceil}$, where $\epsilon$ is a pre-set privacy level we want to achieve. Then we will achieve at least a total $ \epsilon \frac{1}{\lceil \log T \rceil}\lceil \log t \rceil \le \epsilon$ differential privacy. 
The algorithm for adding Laplacian noise to partial sums is shown in Algorithm~\ref{alg:privacy}.
To make the algorithm preserve $\epsilon$-DP, we need to modify the observation procedure in line 14 of Algorithm~\ref{alg:gossip_UCB}.
Specifically, instead of directly getting $X_{i,a_i(t)}(t)$, Algorithm~\ref{alg:privacy} is implemented with observations $\{\widehat X_{i,k}(\tau)\}_{\tau=1}^t$, the mapping $\mathcal P$ recording the noise added to the previous partial sums, and the privacy level $\epsilon$.
Besides, the confidence bound is set following (\ref{eq:confidence_fed}).
Note \texttt{Gossip\_UCB} is a special case of \texttt{Fed\_UCB} when $\epsilon$ is infinity.
We summarize \texttt{Fed\_UCB} in Algorithm~\ref{alg:Fed_UCB}.
Note the time complexity of line \ref{line:dp1} is $O(\log t)$.

\subsection{Concentration Bound for Local Privacy-Preserving Noisy Estimates}
Compared with \texttt{Gossip\_UCB}, the variance proxy for $\tilde \vartheta_{i,k}(t)$ consists of two parts: the variance proxy for observations and the variance proxy for Laplacian noise. The former one is the same as that in \texttt{Gossip\_UCB}. We bound the latter one in Lemma~\ref{varianceproxy_feducb}.
\begin{Lemma}\label{varianceproxy_feducb}
When $n_{i,k}(t)\geq \max\{L,(3M+1)N\}, \forall i\in[N]$, with probability at least $1-p_1,$ the concentration bound for $\tilde \vartheta_{i,k}(t)$ is $
     \mathbb P(|\tilde\vartheta_{i,k}(t) - \mu_{k}|\geq \tilde C_{i,k}(t))<\frac{2}{t^2},$
where $p_1=\frac{2N}{n_{i,k}(t)}+p_0$ and
\begin{equation}\label{eq:confidence_fed}
    \tilde C_{i,k}(t)= \alpha_1 + \sqrt{2N\left(\frac{128N \log^2T \cdot \log t \cdot \log n_{i,k}(t)}{n^2_{i,k}(t)\epsilon^2}+\frac{1}{n_{i,k}(t)}\right)\log t}.
\end{equation}
Note $\epsilon$ is the privacy level and parameters $L, p_0, \alpha_1$ are the same as Lemma~\ref{concentration1}.
\end{Lemma}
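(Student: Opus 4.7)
The strategy is to reuse the \texttt{Gossip\_UCB} analysis (Lemmas~\ref{concentration1} and~\ref{lem:bias}) and quantify the additional deviation introduced by the Laplacian noise that Algorithm~\ref{alg:Fed_UCB} injects through Algorithm~\ref{alg:privacy}. I would write
\[
\tilde\vartheta_{i,k}(t) \;=\; \vartheta_{i,k}(t) \;+\; Z_{i,k}(t),
\]
where $\vartheta_{i,k}(t)$ is the ``clean'' gossip estimate driven by the true observations $X_{i,k}$ (the object analyzed in Lemma~\ref{concentration1}), and $Z_{i,k}(t)$ is the gossip-propagated Laplacian perturbation. Because every Laplace draw is zero-mean and independent of the $X_{i,k}$, we have $\mathbb E[\tilde\vartheta_{i,k}(t)]=\mathbb E[\vartheta_{i,k}(t)]$, so Lemma~\ref{lem:bias} still controls the bias by $\alpha_1$, and Lemma~\ref{concentration1} already contributes the variance-proxy summand $N/(2 n_{i,k}(t))$ which becomes the $\tfrac{1}{n_{i,k}(t)}$ term inside $\tilde C_{i,k}(t)$.

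\textbf{Bounding the privacy-noise variance proxy.} By construction of Algorithm~\ref{alg:partial_sum}, each noisy observation $\tilde X_{j,k}(\tau)$ equals the clean sample mean plus an average of at most $\lceil\log\tau\rceil$ independent ${\sf Lap}(1/\epsilon')$ draws with $\epsilon'=\epsilon/\lceil\log T\rceil$. Unrolling the gossip recursion of Algorithm~\ref{alg:Fed_UCB} in exactly the same way as in Lemma~\ref{concentration1}, $Z_{i,k}(t)$ becomes a linear combination of these independent Laplace draws whose coefficients inherit the bounds on $c_{i,k,j}^{(\tau)}$ from that proof. On the same event (of probability $\ge 1-p_0$) on which $|c_{i,k,j}^{(\tau)}|<1/n_{j,k}(t)$ holds, and using Lemma~\ref{relation2} to control $\max_j n_{j,k}(t)\le 2 n_{i,k}(t)$, there are only $O(N\log n_{i,k}(t))$ independent noise draws contributing to $Z_{i,k}(t)$, each with an effective coefficient of order $1/n_{i,k}(t)$ and variance $2/(\epsilon')^2=O(\log^2 T/\epsilon^2)$. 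Summing produces a variance-proxy contribution of order $N\log^2 T\cdot\log n_{i,k}(t)/(\epsilon^2\, n_{i,k}^2(t))$, matching the first summand inside $\tilde C_{i,k}(t)$ up to the stated constants.

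\textbf{From sub-exponential to sub-Gaussian, and wrap-up.} The subtle step is that Laplace variables are sub-exponential rather than sub-Gaussian, so Property~\ref{hoeffding} cannot be applied to $Z_{i,k}(t)$ directly. I would handle this with a truncation-plus-union-bound step: define the event on which each of the $O(N\log n_{i,k}(t))$ Laplace draws lies within a threshold of order $\log T/\epsilon\cdot\log n_{i,k}(t)$; the exponential Laplace tail together with a union bound make its complement at most $2N/n_{i,k}(t)$, which is precisely the extra slack that upgrades $p_0$ to $p_1=p_0+2N/n_{i,k}(t)$. On the intersection of this truncation event with the gossip-coefficient event, the noises are bounded and hence sub-Gaussian with the variance proxy computed above; since they are independent of the observation randomness, Property~\ref{additivity} lets their proxy add to that of $\vartheta_{i,k}(t)-\mathbb E[\vartheta_{i,k}(t)]$, giving total proxy $N\bigl(\tfrac{128N\log^2 T\cdot\log n_{i,k}(t)}{n_{i,k}^2(t)\epsilon^2}+\tfrac{1}{n_{i,k}(t)}\bigr)$. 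Applying Property~\ref{hoeffding} with this proxy at deviation $\tilde C_{i,k}(t)-\alpha_1$ yields the advertised $2/t^2$ tail, and folding in $|\mathbb E[\tilde\vartheta_{i,k}(t)]-\mu_k|\le\alpha_1$ from Lemma~\ref{lem:bias} completes the proof. The main obstacle is calibrating the truncation threshold so that (i) the union-bound failure probability matches the $2N/n_{i,k}(t)$ term of $p_1$ and (ii) the truncation-inflated variance proxy reproduces the $\log^2 T\cdot\log n_{i,k}(t)$ factor of $\tilde C_{i,k}(t)$ rather than leaking into the additive $\alpha_1$ term.
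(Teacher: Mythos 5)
Your overall architecture is the same as the paper's: split $\tilde\vartheta_{i,k}(t)$ into the clean gossip estimate plus the gossip-propagated Laplace perturbation, reuse the coefficient bounds $|c_{i,k,j}^{(\tau)}|<1/n_{j,k}(t)$ and Lemma~\ref{relation2} from the \texttt{Gossip\_UCB} analysis, keep the $\alpha_1$ bias from Lemma~\ref{lem:bias} (the Laplace draws are zero-mean), handle the sub-exponential tails by conditioning on a high-probability truncation event whose failure probability supplies the extra $2N/n_{i,k}(t)$ in $p_1$, add variance proxies by independence, and finish with Property~\ref{hoeffding}. That is exactly how the paper proceeds, so the plan is sound in outline.

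There is, however, one concrete step that would fail as written: the claim that ``there are only $O(N\log n_{i,k}(t))$ independent noise draws contributing to $Z_{i,k}(t)$, each with an effective coefficient of order $1/n_{i,k}(t)$.'' The binary partial-sum mechanism generates one fresh Laplace draw per tree node, i.e.\ on the order of $n_{j,k}(t)$ distinct draws per agent over the whole history, and --- crucially --- because of the gradient-tracking update, $\tilde\vartheta_{i,k}(t)$ depends on \emph{every} past empirical mean $\tilde X_{j,k}(\tau_h)$, $h=1,\dots,n_{j,k}(t)$, not just the final one; only $\lceil\log t\rceil$ draws appear in any single empirical mean, but $\Theta\bigl(\sum_j n_{j,k}(t)\bigr)$ distinct draws appear in $Z_{i,k}(t)$, each through several empirical means with different weights. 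A draw-by-draw variance computation therefore requires tracking these aggregated per-draw coefficients, which your count does not do, and naively replacing $O(N\log n)$ by the correct $O(Nn_{i,k}(t))$ in your sum would inflate the variance to order $N\log^2 T/(n_{i,k}(t)\epsilon^2)$, destroying the bound. The paper sidesteps this by organizing the union bound \emph{per empirical mean}: for each of the at most $2Nn_{i,k}(t)$ means $\tilde X_{j,k}(\tau_h)$, the sum $\Gamma$ of its $\le\lceil\log t\rceil$ Laplace draws is bounded by $\lambda=4\log T\sqrt{\log t\cdot\log n_{i,k}(t)}/\epsilon$ except with probability $n_{i,k}^{-2}(t)$ (whence $2Nn_{i,k}(t)\cdot n_{i,k}^{-2}(t)=2N/n_{i,k}(t)$); on that event $|\Delta\vartheta_{i,k}(t)|\le\sum_j|c_{i,k,j}^{(0)}|\lambda\le 8N\lambda/n_{i,k}(t)$, and the variance proxy of this bounded quantity is taken as its square. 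That squaring is also where the extra $\log t$ factor inside $\tilde C_{i,k}(t)$ comes from, which your stated proxy omits. You should restructure the truncation and union bound around the per-mean noise sums $S_{j,k}(\tau_h)$ rather than around individual draws.
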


\begin{proof} (Sketch)
Recall $\epsilon' := \epsilon \frac{1}{\lceil \log T \rceil}$.
Up to each time $t$, there are at most $\lceil \log t \rceil $ noise terms added to each $\tilde{X}_{i,k}(t)$.
Let $\gamma_{\tau}\sim {\sf Lap}(1/\epsilon')$ and $\Gamma:= \sum_{\tau=1}^{\lceil \log t \rceil } \gamma_{\tau}$.
Set $$\lambda:= \frac{4   \log T  \cdot \sqrt{\log t \cdot \log n_{i,k}(t)}}{\epsilon}.$$
The sum of these Laplace noise terms satisfy the following inequality \cite{dwork2006calibrating}:  
\[
\mathbb P(\Gamma \geq \lambda) \leq \exp\left(-\frac{\lambda^2}{8\sum_{\tau} \frac{1}{\epsilon'^2}}\right) \le  n_{i,k}^{-2}(t).
\]
Thus with probability at most ${n_{i,k}^{-2}(t)}$, the noise term added to each empirical mean $\tilde{X}$ is larger than $\frac{\lambda }{n_{i,k}(t)}$. 
Now we want to figure out how the added noise would affect the regret, more specifically, the $\vartheta_{i,k}(t).$ According to \eqref{tilde}, $\vartheta_{i,k}(t)$ is a linear combination of $\tilde{X}_{i,k}(t),$ denote $\Delta \vartheta_{i,k}(t)$ as the total noise added in $\tilde{\vartheta}_{i,k}(t),$  $S_{i,k}(t)$ as the cumulative noise added in $\sum_{\tau=1}^t \tilde{\theta}(\tau).$ Then $\Delta \vartheta_{i,k}(t)$ is a linear combination of $S_{i,k}(t)$ and we have with probability at least $1-p_1,$
\begin{align*}
    |\Delta \vartheta_{i,k}(t)|\leq\frac{8N}{n_{i,k}(t)}\log T  \cdot \sqrt{\log t \cdot \log n_{i,k}(t)}/\epsilon,
\end{align*}
where $$p_1=\frac{2N}{n_{i,k}(t)}+p_0.$$
Define $\tilde \sigma^2_{i,k}(t)$ as the variance proxy of $\tilde{\vartheta}_{i,k}(t).$
According to Algorithm~\ref{alg:privacy}, each new observation comes with an additional independent Laplacian noise term. Noting the linear combination derived from gossiping procedures will not change the independence, the gossiped noise $\Delta \vartheta_{i,k}(t)$ is independent of each $X_{j,k}(\tau),$ for all  $j\in [N], \tau\in\{0,\ldots,t\}$.
According to the definition of the variance proxy, we know $\sigma^2_{\Delta} = |\Delta \vartheta_{i,k}(t)|^2$. Summing two variance proxies, i.e. $\sigma^2_{i,k}(t)+\sigma^2_{\Delta}$ returns $\tilde \sigma^2_{i,k}(t)$.
Following the corresponding procedures as Section~\ref{sec:concentration_bound}, we can prove this lemma. 
\end{proof}

\subsection{Regret Upper Bound for \texttt{Fed\_UCB}}
With the existence of Laplacian noise, Theorem~\ref{thm:main} can be extended for \texttt{Fed\_UCB} as follows.
\begin{Theorem}\label{privacy_theorem}
{({Regret upper bound for \texttt{Fed\_UCB}})}
For the $\epsilon$-differentially private \texttt{Fed\_UCB} algorithm with bounded reward over $[0,1]$ and confidence bound $\tilde C_{i,k}(t)$, 
the regret of each agent $i$ until time $T$ satisfies the bound 
{\small
\begin{equation*}%
    R_i(T)<\sum_{\Delta_k>0}\Delta_k\left(\max\left\{\frac{N  \log T \cdot \left( 1+ \sqrt{1+ \left({16(\frac{\Delta_k}{2}-\alpha_1)}/{\epsilon}\right)^2 \log^3 T} \right)}{(\frac{\Delta_k}{2}-\alpha_1)^2} 
     ,L,(3M+1)N\right\}+4N\log T+\alpha_3\right),
\end{equation*}}
where $\alpha_3:= (3M-1)N + \frac{2\pi^{2}}{3}+\frac{2\lambda_2^{1/12}}{(1-\lambda_{2}^{{1}/{3}})(1-\lambda_{2}^{{1}/{12}})} + 4N.$
\end{Theorem}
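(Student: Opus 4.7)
(Proposed plan)
The plan is to mirror the structure of the proof of Theorem~\ref{thm:main}, but now working with the noisy estimate $\tilde{\vartheta}_{i,k}(t)$ in place of $\vartheta_{i,k}(t)$, and with the tightened confidence radius $\tilde{C}_{i,k}(t)$ from Lemma~\ref{varianceproxy_feducb}. In particular, because $\tilde{C}_{i,k}(t)$ replaces $C_{i,k}(t)$, I will re-run the standard UCB decomposition: on the good event $|\tilde{\vartheta}_{i,k}(t)-\mu_k|<\tilde{C}_{i,k}(t)$, a suboptimal arm $k$ is selected only in one of four situations analogous to those in Theorem~\ref{thm:main}: (1) $k\in\mathcal{A}_i$; (2) $\tilde{\vartheta}_{i,k}(t)-\mu_k\geq \tilde{C}_{i,k}(t)$; (3) $\mu_1-\tilde{\vartheta}_{i,1}(t)\geq \tilde{C}_{i,1}(t)$; (4) $\mu_1-\mu_k<2\tilde{C}_{i,k}(t)$. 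The contribution from Case~1 is still at most $(3M-1)N+\pi^2/3$ by exactly the same argument as before, so I will simply reuse it.

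The first genuinely new step is to identify the threshold $n_{i,k}(t')$ above which Case~4 cannot hold. Substituting the expression for $\tilde{C}_{i,k}(t)$ from~(\ref{eq:confidence_fed}), squaring, upper-bounding $\log t\le \log T$ and $\log n_{i,k}(t)\le \log T$, and writing $a=\tfrac{\Delta_k}{2}-\alpha_1$ and $n=n_{i,k}(t)$, the condition $a\ge \tilde{C}_{i,k}(t)-\alpha_1$ reduces to the quadratic inequality
\[
a^2 n^2 - 2N\log T\cdot n - \tfrac{256 N^2\log^5 T}{\epsilon^2}\ge 0,
\]
whose smallest solution is
\[
n\ge \frac{N\log T\left(1+\sqrt{1+\left(16a/\epsilon\right)^2\log^3 T}\right)}{a^2},
\]
which exactly matches the first term in the maximum inside the theorem. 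Plugging in the fact that the algorithm must additionally enforce $n_{i,k}(t)\ge\max\{L,(3M+1)N\}$ in order for the concentration in Lemma~\ref{varianceproxy_feducb} to apply explains the $\max\{\cdot,L,(3M+1)N\}$ structure.

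For Cases~2 and~3, the key difference from Theorem~\ref{thm:main} is that the failure probability in Lemma~\ref{varianceproxy_feducb} is $p_1=\tfrac{2N}{n_{i,k}(t)}+p_0$ rather than $p_0$. I will therefore split each of Cases~2 and~3 into three sub-cases: (A) the concentration in~(\ref{concentration_whole})-style holds and gives the $\pi^2/6$ bound by $\sum_t 1/t^2$; (B) the gossiping randomness fails, contributing the geometric tail $\lambda_2^{n_{i,k}(t')/12}/[(1-\lambda_2^{1/3})(1-\lambda_2^{1/12})]$ as before; and (C) the Laplace noise is too large, which occurs with probability at most $2N/n_{i,k}(t)$. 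Summing the last contribution over the pulls of arm $k$ gives
\[
\sum_{n'=1}^{n_{i,k}(T)}\frac{2N}{n'}\le 2N\log T,
\]
and pairing this with the symmetric contribution for arm~$1$ yields the additive $4N\log T$ term appearing in the theorem. Collecting all sub-cases along with Case~1 gives the constant $\alpha_3=\alpha_2+4N$, where the extra $4N$ absorbs the $n'=1$ boundary of the two harmonic sums.

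The main obstacle I expect is purely bookkeeping in deriving the $n_{i,k}(t')$ threshold: the confidence radius $\tilde{C}_{i,k}(t)$ has a compound form with the Laplace-noise variance scaling like $\log^2 T\cdot \log t\cdot \log n/(n^2\epsilon^2)$ and the sub-Gaussian variance scaling like $1/n$, so carefully applying the quadratic formula while replacing $\log t$ and $\log n$ by $\log T$ (without losing constants or introducing $n$-dependence on the right-hand side) is the delicate part. Once that threshold is in place, the remainder of the proof is a direct and essentially mechanical extension of the Theorem~\ref{thm:main} argument with the extra Laplace-noise branch accounted for, yielding the claimed bound.
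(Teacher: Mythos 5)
Your proposal is correct and follows essentially the same route as the paper's own (much terser) proof: the same four-case UCB decomposition, the same quadratic-formula derivation of the threshold $n_{i,k}(t')$ after bounding $\log t$ and $\log n_{i,k}(t)$ by $\log T$, and the same accounting of the extra Laplace-noise failure probability $2N/n_{i,k}(t)$ via a harmonic sum to produce the $4N\log T$ term and the $+4N$ absorbed into $\alpha_3$. In fact your write-up makes explicit the steps the paper compresses into ``following the similar steps in the analyses of \texttt{Gossip\_UCB}.''
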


\begin{proof}

Following the concentration bound for $\tilde\vartheta_{i,k}(t)$, we set $\tilde C_{i,k}(t)$ in (\ref{eq:confidence_fed}) as the upper confidence bound.
Notice %
$$\tilde C_{i,k}(t)<\sqrt{\left(\frac{128N\log^4T}{n_{i,k}(t)\epsilon^2}+1\right)\cdot\frac{2N \log t}{n_{i,k}(t)}}+\alpha_1,$$
following the similar steps in the analyses of \texttt{Gossip\_UCB}, we get with probability at least $1-p_1,$ after time $t'$ which satisfies:
\begin{align*}
    n_{i,k}(t')=\frac{N \log T \cdot \left( 1+ \sqrt{1+ \left(16(\frac{\Delta_k}{2}-\alpha_1)/\epsilon\right)^2 \log^3 T} \right)}{(\frac{\Delta_k}{2}-\alpha_1)^2},
\end{align*}
the expected error resulting from agent $i$ selecting arm $k$ is bounded by $(3M-1)N+\frac{2\pi^{2}}{3}.$ 
The remaining part (similar to Case 2B and Case 3B in Section~\ref{eq:secRegretGossip}) is bounded by $\frac{2\lambda_2^{1/12}}{(1-\lambda_{2}^{{1}/{3}})(1-\lambda_{2}^{{1}/{12}})} + 4N(\log T+1)$.
Define $\alpha_3 = \alpha_2 + 4N$.
Therefore, the expected number of non-optimal arms is bounded as 
\begin{equation*} %
    \mathbf{E}[n_{i,k}(T)]<\max\left\{ \frac{N \log T \cdot \left( 1+ \sqrt{1+ \left(16(\frac{\Delta_k}{2}-\alpha_1)/\epsilon\right)^2 \log^3 T} \right)}{(\frac{\Delta_k}{2}-\alpha_1)^2} 
     ,L,(3M+1)N\right\}+ 4N\log T+ \alpha_3.
\end{equation*}

Using the union bound, the regret of agent $i$ until time $T$ is bounded as:
\begin{equation*}%
\begin{split}
    R_i(t)
    <&\sum_{\Delta_k>0}\Delta_k\Bigg(\max\Bigg\{\frac{N \log T \cdot \bigg( 1+ \sqrt{1+ \big({16(\frac{\Delta_k}{2}-\alpha_1)}/{\epsilon}\big)^2 \log^3 T} \bigg)}{(\frac{\Delta_k}{2}-\alpha_1)^2} ,L,(3M+1)N\Bigg\}+4N \log T +\alpha_3 \Bigg).
     \end{split}
\end{equation*}

\end{proof}

\begin{Remark}\label{rmk:orderDP}
The order of $R_i(T)$ is $O(\max \{\frac{N M}{\epsilon} \log^{2.5} T , M(N\log T +\log_{\lambda_2^{-1}} N) \})$.
\end{Remark}

Remark~\ref{rmk:orderDP} shows introducing $\epsilon$-DP leads to a regret bound in the order $O(\log^{2.5}(T))$, which is the same (in terms of $T$) as the one reported for the centralized single-agent algorithm \cite{Mishra2014PrivateSM}.\footnote{Note the regret bound may be different due to the adoption of different privacy algorithms, and the coupling effects caused by the delayed transmission of heterogeneous reward information, e.g., \cite{tossou2016algorithms} proposes an single-agent algorithm with regret order $O(\epsilon^{-1}+\log T)$ while achieving the $(\epsilon,\delta)$-DP, $\delta>0$. Their algorithm achieves a worse guaranteed privacy level than ours when $\delta=0.$}

\section{Experiments}
\begin{figure*}[!t]
     \centering
     \begin{subfigure}[b]{0.326\textwidth}
         \centering
         \includegraphics[width=\textwidth]{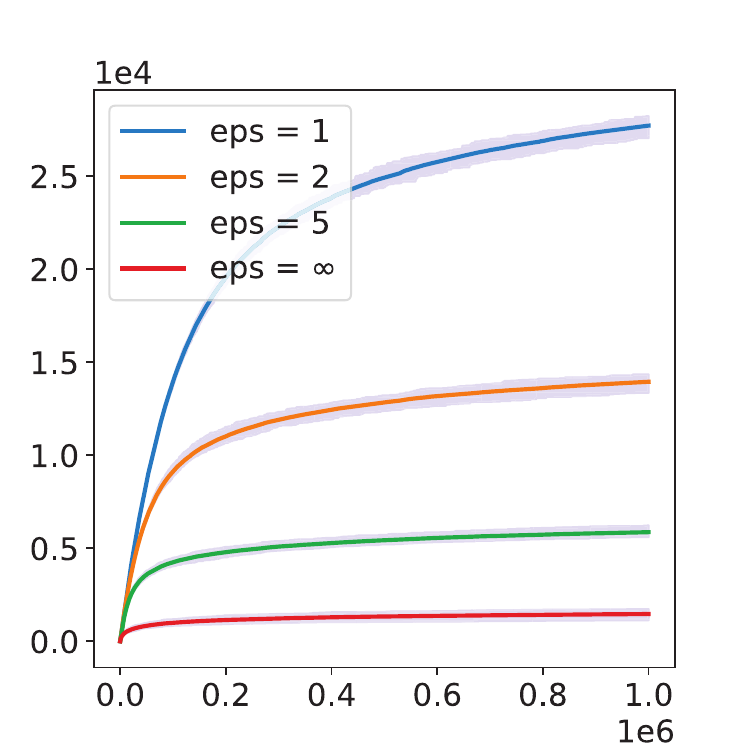}
         \caption{}%
         \label{fig:synthetic1}
     \end{subfigure}
     \hfill
     \begin{subfigure}[b]{0.326\textwidth}
         \centering
         \includegraphics[width=\textwidth]{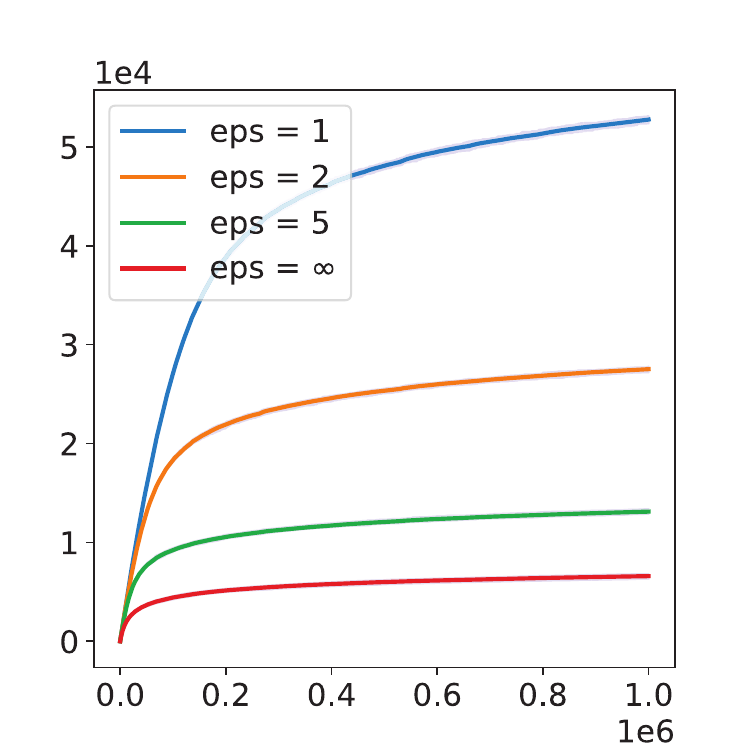}
         \caption{}
         \label{fig:synthetic2}
     \end{subfigure}
     \hfill
     \begin{subfigure}[b]{0.326\textwidth}
         \centering
         \includegraphics[width=\textwidth]{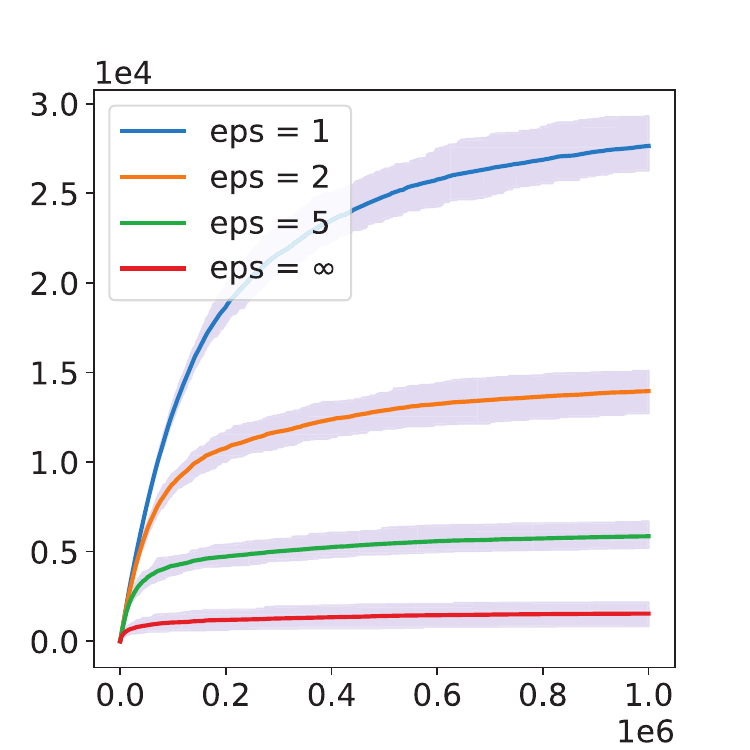}
         \caption{}
         \label{fig:real}
     \end{subfigure}
        \caption{Regret of \texttt{Fed\_UCB} over time when the algorithm preserves $\epsilon$-DP. Higher $\epsilon$ indicates lower privacy level. 
        (a) Synthetic data with $N=3, M=5, \lambda_2 = 0.5$; (b) Synthetic data with $N=10, M=10, \lambda_2 \approx 0.99$; (c) Real hospital data with $N=3,M=5,\lambda_2 = 0.5$.}
        \label{fig:simu}
\noindent\rule[0.25\baselineskip]{\textwidth}{.5pt}
\end{figure*}

In addition to the theoretical guarantees, we test the performance of \texttt{Fed\_UCB} in multi-agent stochastic multi-arm bandits problems on both synthetic datasets and real datasets.
Each experiment is run for $100$ times with i.i.d. noise and the regret is averaged over $100$ trials and $N$ agents. 
Shadow areas indicate the range of realized regret (minimum and maximum in $100$ trials).
When $\epsilon=\infty$, there is no extra privacy preservation step added as discussed in \texttt{Gossip\_UCB}. 
To show the effect of $\epsilon$-DP preservation on the regret of \texttt{Fed\_UCB}, we add Laplacian noise following Algorithm \ref{alg:privacy}. The total privacy levels are set as $\epsilon = 1,2,5$, respectively.

\subsection{Synthetic datasets}
In the synthetic data, the expected belief of agent $i$ toward arm $k$, i.e. $\mu_{i,k}$, is generated randomly in $[0,1]$. The rank of arms from each agent's perspective is different. Zero-mean Gaussian noise with variance $1$ is added to each observation. 
Fig.~\ref{fig:simu}(a) and Fig.~\ref{fig:simu}(b) show the simulation results in different settings. When the increase of regret is stable, e.g. $t=$6e5, in both figures, the ratios of regret with $\epsilon_i\in\{1,2,5\}$ are approximately $1:1/2:1/5 = 1/\epsilon_1 : 1/\epsilon_2 : 1/\epsilon_3$, which is consistent with our bound on the regret of \texttt{Fed\_UCB}.

\subsection{Real-world datasets}

We use the UCI diabetes dataset \cite{strack2014impact}, which is a medical dataset including $101,766$ inpatient records in US hospitals. There are over $50$ features representing patient and hospital outcomes, e.g., results of some laboratory tests, personal information of patients, and the specific medications administered during the encounter. The label encodes three readmission states of patients: no record, readmitted in more than $30$ days, and readmitted in less than $30$ days.
\paragraph{Construction of arms:}
There are $23$ diabetes medications: %
metformin, repaglinide, nateglinide, chlorpropamide, glimepiride, %
acetohexamide, glipizide, glyburide, tolbutamide, pioglitazone, %
rosiglitazone, acarbose, miglitol, troglitazone, tolazamide, %
examide, sitagliptin, insulin, glyburide-metformin, glipizide-metformin, %
Each medication contains four sub-features, i.e., \emph{No}, \emph{Down}, \emph{Up}, \emph{Steady}, indicating whether the drug was prescribed or there was a change in the dosage. The number of each sub-feature is pretty unbalanced. Generally, there are much less samples with sub-features \emph{Down} and \emph{Up} compared with \emph{No} and \emph{Steady}. To ensure a sufficient number of samples, we only focus on the sub-feature \emph{No} and \emph{Steady}. Additionally, noting the number of \emph{Steady} samples in different medications is also very imbalanced, we combine some medications with similar rewards and get $5$ arms. They are:
\begin{itemize}\setlength\itemsep{0pt}
    \item \emph{Arm $1$}: Use steady dosage of insulin;
    \item \emph{Arm $2$}: Use steady dosage of metformin;
    \item \emph{Arm $3$}: Use steady dosage of repaglinide, glipizide, rosiglitazone, acarbose, miglitol or troglitazone;
    \item \emph{Arm $4$}: Use steady dosage of nateglinide, chlorpropamide, glyburide, examide, glimepiride, acetohexamide, glyburide, tolbutamide, pioglitazone, sitagliptin, glyburide-metformin, glipizide-metformin, glimepiride-pioglitazone, metformin-rosiglitazone, or metformin-pioglitazone;
    \item \emph{Arm $5$}: No action in any medications.
\end{itemize}

\paragraph{Construction of rewards:} There are $3$ readmission states: \emph{no record}, \emph{readmitted in more than $30$ days}, and \emph{readmitted in less than $30$ days}.
If the patient is readmitted in less than $30$ days, we can infer the treatment (medication) does not work. On the contrary, if the patient is readmitted in more than $30$ days, we can infer the treatment (medication) works in some sense.
If there is no record for this patient, it may because the patient chooses other hospitals or is cured, which makes it difficult to determine the effectiveness of the treatment.
Additionally, we cannot simply discard these samples since they account for a large population.
In our experiments, we assume \emph{no record} means the treatment does not work well.
There are several evidences:
\begin{itemize}\setlength\itemsep{0pt}
\item \emph{Evidence 1}: Diabetes are chronic diseases, it is hard to be cured in one hospital stay; 
\item \emph{Evidence 2}: By simple logistic regression, we can find the readmission states are highly-related to the discharge disposition, including transferring other hospitals or medical institutions. This phenomenon supports our speculation that \emph{no record} mainly shows bad treatments; 
\item \emph{Evidence 3}: From our numerical results, assuming \emph{no record} indicating the positive effect of treatments leads to a unreasonable result: nearly all steady medications have negative effects. 
\end{itemize}
Therefore, we map \emph{no record} and \emph{readmitted in less than $30$ days} to reward $0$, and map \emph{readmitted in more than $30$ days} to reward $1$.

 In this experiment, we focus on the relationship between medications and readmission states. Noting no one wants to be known about their readmission status, we consider a federated bandit setting where hospitals (agents) do not directly share their rewards (readmission states) due to privacy concerns. On one hand, the privacy can be protected in some sense the gossiping procedure since the variable for gossiping is a weighted combination of the protected data with complicated (and possibly unknown) weights. On the other hand, the hospital can only use a noisy copy of readmission states in the DP setting.
Each hospital protects their patient's privacy in this mechanism\footnote{To further protect the patient's privacy, the Laplacian noise can be added by some trust-worthy institutions such that hospitals cannot get the clean data.}. 
We map $23$ types of medications into $5$ arms, and randomly divide the samples into $3$ equal-sized set belonging to three hospitals (agents). 
From our numerical results, local biased indeed occurs when the dataset is separated into several parts according to the order in which the samples appear.
We leave detailed data processing procedures in supplementary materials.
The feasibility of federated bandit in a real hospital dataset is testified in Fig.~\ref{fig:simu}(c).

\section{Conclusion}
In this paper, we have proposed \texttt{Gossip\_UCB} for solving a gossiping bandit learning problem, where a network of agents aim to learn to converge to selecting the best arm both locally and globally through gossiping, and its differentially private variant, \texttt{Fed\_UCB}, for preserving $\epsilon$-differential privacy of the agents' local data.
We have shown that both \texttt{Gossip\_UCB} and \texttt{Fed\_UCB} achieve weak regret at an order depending on 
the size of agents, the number of arms, time horizon, and connectivity of the graph. In addition to the theoretical bounds on regret, experiments on both synthetic data and real data also verify the feasibility of the proposed gossiping approach of federated bandit.
Future work may include extending this framework to contextual bandits \cite{dubey2020kernel} with local features or bandits with continuous arms \cite{combes2020unimodal}.

\section*{Acknowledgement}
This work is partially supported by the National Science Foundation (NSF) under grant IIS-2007951 and the Office of Naval Research under grant
N00014-20-1-22.

\bibliographystyle{ACM-Reference-Format}
\bibliography{main}


\appendix

\vspace{10pt}
{\Large \bf Appendix}
\vspace{10pt}

The analyses are based on sub-Gaussian random variables.
We present the following preliminaries.

\section{Preliminaries of sub-Gaussian random variables}\label{subgaussian}

\begin{Definition}
A random variable $X$ with $\mu=\mathbb{E}[X]$ is called $\sigma^{2}$ sub-Gaussian if there is a positive $\sigma$ such that
\begin{align*}
    \mathbb{E}[e^{\lambda(X-\mu)}]\leq e^{\frac{\sigma^{2}\lambda^{2}}{2}}, \;\forall \lambda\in \R,
\end{align*}
where such $\sigma^{2}$ is called a variance proxy, and the smallest variance proxy is called the optimal variance proxy. 
\end{Definition}

\begin{Property} (Inequality)\label{hoeffding}
A sub-Gaussian random variable $X$ satisfies:
\begin{equation*}%
    \begin{split}
        \bbb{P}(X-\mu\geq a)&\leq e^{-\frac{a^{2}}{2\sigma^{2}}},\\
        \bbb{P}(\mu-X\geq a)&\leq e^{-\frac{a^{2}}{2\sigma^{2}}}.
    \end{split}
\end{equation*}
\end{Property}

\begin{Property} (Sufficient condition)\label{sufficientcondition}
If $X$ is a random variable with finite mean $\mu$ and $a\leq X\leq b$ almost surely, then $X$ is $\frac{(b-a)^{2}}{4}$ sub-Gaussian.
\end{Property}

\begin{Property} (Additivity)\label{additivity}
If $X_{1}$ is $\sigma_{1}^{2}$ sub-Gaussian and for $2\leq i \leq n, (X_{i}|X_{1},\ldots, X_{i-1})$ is $\sigma_{i}^{2}$ sub-Gaussian with $\sigma_{i}$ being free of $X_{1},\ldots,X_{i-1},$ then $X_{1}+\cdots+X_{i}$ is sub-Gaussian with $\sigma_{1}^{2}+\cdots+\sigma_{i}^{2}$ being one of its variance proxy.
\end{Property}

\section{Full Proofs}

\subsection{Proof for Lemma~\ref{concentration1}}\label{sup_proxy}
\begin{customlem}{4}\label{sup:varianceproxy}
When $n_{i,k}(t)\geq \max\{L,(3M+1)N\}, \forall i\in[N],$ with probability at least $1-p_0,$ $|\vartheta_{i,k}(t)-\mathbb{E}(\vartheta_{i,k}(t))|$ bounds as follows:
\begin{align*}
    & \mathbb P\left(|\vartheta_{i,k}(t)-\mathbb{E}[\vartheta_{i,k}(t)]|\geq \sqrt{\frac{2N}{n_{i,k}(t)}\log t}\right)  <\frac{2}{t^2},
\end{align*}
where 
$p_0=
\frac{\lambda_{2}^{{n_{i,k}(t)}/{12}}}{1-\lambda_{2}^{{1}/{3}}}
,$ $L$ is the minimal value that satisfies
${\lambda_{2}^{{t}/{6}}}/{(1-\lambda_{2}^{{1}/{3}})}<{(N t)^{-1}}$, $\forall t\geq L.$ 
\end{customlem}

\begin{proof}

We can infer from the algorithm that each $\vartheta_{i,k}(t)$ is a linear combination of $X_{j,k}(\tau)$, for all  $j\in [N], \tau\in\{0,\ldots,t\}.$ Define $c_{i,k,j}^{(\tau)}$ as the corresponding coefficient of such  $X_{j,k}(\tau)$ in $\vartheta_{i,k}(t).$ To find the variance proxy of $\vartheta_{i,k}(t),$ which is $\frac{1}{4}\sum_{j=1}^{N}\sum_{\tau=1}^{t}|c_{i,k,j}^{(\tau)}|^2$ according to Property~\ref{sufficientcondition} and Property~\ref{additivity} in Appendix \ref{subgaussian}, we will estimate the value of $c_{i,k,j}^{(\tau)}$ in the following steps.

The vector form of the iterative process can be expressed as:
\begin{align}
    \vartheta_{k}(t)=W(t)\vartheta_{k}(t-1)+\Tilde{X}_{k}(t)-\Tilde{X}_{k}(t-1),\label{vectorupdate}
\end{align}
for all $k\in [M],$ where $W(t)=I-\frac{1}{2}(e_i-e_j)(e_i-e_j)^{\top}$ is the gossip updating matrix when agent $i$ and $j$ are selected to exchange information at time $t$. 
According to \eqref{vectorupdate}, we have: 
\begin{align*}
    \vartheta_{k}(t)&=W(t)\vartheta_{k}(t-1)+\Tilde{X}_{k}(t)-\Tilde{X}_{k}(t-1)\\
    &=\left(W(t)\cdots W(1)-W(t)\cdots W(2)\right)X_{k}(0) +(W(t)\cdots W(2)-W(t)\cdots W(3))\Tilde{X}_{k}(1)\\
    &\quad +\cdots+(W(t)-I)\tilde{X}_{k}(t-1)+\tilde{X}_{k}(t).
\end{align*}
Define $\tau_{1},\ldots, \tau_{n_{j,k}(t)}$ as the time instance before $t$ when agent $j$ pulls arm $k$, so $\tau_{1}=0$, and we have
\begin{equation}
    \begin{split}
        \vartheta_{i,k}(t)&=\sum_{j}\bigg([W(t)\cdots W(\tau_{1}+1)-W(t)\cdots W(\tau_{2}+1)]_{i,j}\Tilde{X}_{j,k}(\tau_{1})\\&\quad+\cdots+[W(t)\cdots W(\tau_{n_{j,k}(t)-1}+1)-W(t)\cdots W(\tau_{n_{j,k}(t)}+1)]_{i,j}\Tilde{X}_{j,k}(\tau_{n_{j,k}(t)-1})\\&\quad +[W(t)\cdots W(\tau_{n_{j,k}(t)}+1)]_{i,j}\Tilde{X}_{j,k}(\tau_{n_{j,k}(t)})\bigg),\label{tilde}
    \end{split}
\end{equation}
and
\begin{equation}
    \begin{split}
        c_{i,k,j}^{(0)}&=\bigg[(W(t)\cdots W(\tau_{1}+1)-W(t)\cdots W(\tau_{2}+1))+\cdots\\&\quad+\frac{W(t)\cdots W(\tau_{n_{j,k}(t)-1}+1)-W(t)\cdots W(\tau_{n_{j,k}(t)}+1)}{n_{j,k}(t)-1}+\frac{W(t)\cdots W(\tau_{n_{j,k}(t)}+1)}{n_{j,k}(t)}\bigg]_{i,j}, \label{c0}
    \end{split}
\end{equation}
where $[\cdot]_{i,j}$ denotes the entry in the $i$-th row and the $j$-th column.
Notice due to the delay in updating, there exists some $h',$ when $h>h', \tilde{X}_{j,k}(\tau_{h})$ can never be transmitted to agent $i$ till time $t,$ in this case, the coefficient of such $\tilde{X}_{j,k}(\tau_{h})$ in \eqref{tilde} is zero. The number of non-zero terms in \eqref{c0} is always no larger than $\sum_{j=1}^{N}n_{j,k}(t).$ 

We can also write \eqref{c0} as 
 \begin{equation}
     \begin{split}
         c_{i,k,j}^{(0)}=\left[W(t)\cdots W(\tau_{1}+1)-\sum_{h=2}^{n_{j,k}(t)}\frac{W(t)\cdots W(\tau_{h}+1)}{(h-1)h}\right]_{i,j}.
     \end{split}
 \end{equation}
 
 We want to estimate the value of $W(t)\cdots W(\tau_{h}+1).$ According to \cite{Boyd2006}, we have
 \begin{align}
     \mathbb P\left(\left|[W(t)\cdots W(\tau_{h}+1)]_{i,j}-\frac{1}{N}\right|>\lambda_{2}^{\frac{1}{3}(t-\tau_{h})}\right)<\lambda_{2}^{\frac{1}{3}(t-\tau_{h})},\label{Boyd}
 \end{align}
 where $\lambda_2 \in (0,1)$ is the second largest eigenvalue of the expected gossip matrix $W$. Then with probability at least $1-\sum_{h=\frac{n_{j,k}(t)}{2}}^{n_{j,k}(t)}\lambda_2^{\frac{1}{3}h}$, we have
 \begin{align*}
c_{i,k,j}^{(0)} \leq & \left[W(t)\cdots W(\tau_{1}+1)-\sum_{h=2}^{\frac{n_{j,k}(t)}{2}}\frac{W(t)\cdots W(\tau_{h})}{(h-1)h)}\right]_{i,j}\\
<&\frac{1}{N}\left(1-\sum_{h=2}^{\frac{n_{j,k}(t)}{2}}\frac{1}{(h-1)h}\right)+\sum_{h=\frac{n_{j,k}(t)}{2}}^{n_{j,k}(t)}\lambda_2^{\frac{1}{3}h}<\frac{1}{N}\frac{2}{n_{j,k}(t)}+\frac{\lambda_{2}^{\frac{n_{j,k}(t)}{6}}}{1-\lambda_{2}^{\frac{1}{3}}}.
\end{align*}
Due to the global consistency claimed in Lemma~\ref{relation2}, we have
$$
p_0:=\frac{\lambda_{2}^{\frac{n_{i,k}(t)}{12}}}{1-\lambda_{2}^{\frac{1}{3}}} \ge \frac{\lambda_{2}^{\frac{n_{j,k}(t)}{6}}}{1-\lambda_{2}^{\frac{1}{3}}} = 
\sum_{h=\frac{n_{j,k}(t)}{2}}^{n_{j,k}(t)}\lambda_2^{\frac{1}{3}h}.
$$
Thus the above inequality also holds with probability at least $1-p_0$.
According to \cite{Boyd2006}, the event in (\ref{Boyd}) could happen at the same time for all $j$ with probability at least $1-p_0$.
Similarly, we have the lower bound with probability at least $1-p_0$:
 \begin{align*}
     c_{i,k,j}^{(0)} = &\Bigg[W(t)\cdots W(\tau_{1}+1) -\left(\sum_{h=2}^{\frac{n_{j,k}(t)}{2}}+\sum_{h=\frac{n_{j,k}(t)}{2}+1}^{n_{j,k}(t)}\right)\frac{W(t)\cdots W(\tau_{h}+1)}{(h-1)h}\Bigg]_{i,j}\\
     >&\frac{1}{N}-\lambda_2^{\frac{1}{3}t}-\left[\sum_{h=2}^{\frac{n_{j,k}(t)}{2}}W(t)\cdots W(\tau_{h}+1)\right]_{i,j}-\sum_{h=\frac{n_{j,k}(t)}{2}+1}^{n_{j,k}(t)}\frac{1}{(h-1)h}\\
     =&\frac{1}{N}\left(1-\sum_{h=2}^{\frac{n_{j,k}(t)}{2}}\frac{1}{(h-1)h}\right)-\sum_{h=1}^{\frac{n_{j,k}(t)}{2}}\lambda_2^{\frac{1}{3}(t-\tau_h)} -\sum_{h=\frac{n_{j,k}(t)}{2}+1}^{n_{j,k}(t)}\frac{1}{(h-1)h}\\
     >&\frac{1}{N}\left(1-\sum_{h=2}^{\frac{n_{j,k}(t)}{2}}\frac{1}{(h-1)h}\right)-\sum_{h=\frac{n_{j,k}(t)}{2}}^{n_{j,k}(t)}\lambda_2^{\frac{1}{3}h}-\sum_{h=\frac{n_{j,k}(t)}{2}+1}^{n_{j,k}(t)}\frac{1}{(h-1)h}\\
     >&\frac{1}{N}\frac{2}{n_{j,k}(t)}-\frac{\lambda_{2}^{\frac{n_{j,k}(t)}{6}}}{1-\lambda_{2}^{\frac{1}{3}}}-\frac{1}{n_{j,k}(t)}.\\
 \end{align*}

 Thus, with probability at least $1-p_0$:
 \begin{align*}
     |c_{i,k,j}^{(0)}|<\max\left(\left(1-\frac{2}{N}\right),\frac{2}{N}\right)\frac{1}{n_{j,k}(t)}+\frac{\lambda_{2}^{\frac{n_{j,k}(t)}{6}}}{1-\lambda_{2}^{\frac{1}{3}}}.
 \end{align*}
 Let $L$ be the smallest value that makes $\frac{\lambda_{2}^{\frac{t}{6}}}{1-\lambda_{2}^{\frac{1}{3}}}<\frac{1}{N t}$ hold for all $t\geq L.$ Since $N\ge 3,$ when $n_{j,k}(t)\geq L,$ we have $$|c_{i,k,j}^{(0)}|<\frac{1}{n_{j,k}(t)}.$$ 
 It is easy to see other $c_{i,k,j}^{(\tau_{h})}$ is the last $n_{j,k}(t)-h+1$ terms in $c_{i,k,j}^{(0)}$ which is defined in \eqref{c0}, so similarly, $c_{i,k,j}^{(\tau_{h})}$ also have above property, for all $h\leq n_{j,k}(t)$. Thus when $n_{i,k}(t)\geq \max\{L,(3M+1)N\}, \forall i$, using Lemma \ref{relation2}, we know, with probability at least $1-p_0,$
 \begin{align*}
     \sum_{j=1}^{N}\sum_{h=1}^{n_{j,k}(t)}|c_{i,k,j}^{(\tau_{h})}|^2&<\sum_{j=1}^{N}\frac{n_{j,k}(t)}{n^2_{j,k}(t)}<\frac{2N}{n_{i,k}(t)}. 
 \end{align*}
And further we can get  
\begin{align*}
    \sigma^2_{i,k}(t)&=\frac{1}{4}\sum_{j=1}^{N}\sum_{h=1}^{n_{j,k}(t)}|c_{i,k,j}^{(\tau_{h})}|^2 <\frac{N}{2n_{i,k}(t)}.
\end{align*}
According to the Hoeffding's inequality for sub-Gaussian random variables (specified in Property \ref{hoeffding} of Appendix \ref{subgaussian}),
the bound for $|\vartheta_{i,k}(t)-\mathbb{E}(\vartheta_{i,k}(t))|$ is
(with probability at least $1-p_0$)
\begin{align*}
    & \mathbb P\left(|\vartheta_{i,k}(t)-\mathbb{E}[\vartheta_{i,k}(t)]|\geq \sqrt{\frac{2N}{n_{i,k}(t)}\log t}\right)     \leq  2\exp{\frac{\frac{-2N}{n_{i,k}(t)}\log t}{2\sigma^2_{i,k}(t)}}<\frac{2}{t^2}.
\end{align*}
\end{proof}

\subsection{Proof for Lemma~\ref{lem:bias}}
\begin{customlem}{5}\label{sup:bias}
With probability $1$,
\[
|\mathbb{E}(\vartheta_{i,k}(t))-\mu_{k}| \le \sqrt{N}\lambda_2^t, \forall i\in [N], k\in [M].
\]
\end{customlem}
\begin{proof}
Taking expectation on both sides of (\ref{vectorupdate}) yields: 
\begin{align*}
    \mathbb{E}(\vartheta_{k}(t))&=W\mathbb{E}(\vartheta_{i,k}(t-1))+\mathbb{E}(\Tilde{X}_{k}(t)-\Tilde{X}_{k}(t-1))\\
    &=W\mathbb{E}(\vartheta_{i,k}(t-1))\\
    &=W^t\mathbb{E}(\bm \vartheta_{k}(0))
\end{align*}

Since %
\begin{align*}
    \|W^t-\frac{1}{N}\1\cdot \1^{\top}\|_2=\lambda_2^t,
\end{align*}
denoting $\1_{i}$ by the $N\times 1$ column vector with the $i$-th element being $1$ and others being $0$, we have 
\begin{equation}\label{eq:bound_mu}
    \begin{split}
        & |\mathbb{E}(\vartheta_{i,k}(t))-\mu_{k}|\\
  = & |\1_{i}^\top (\mathbb{E}(\bm \vartheta_{k}(t))- \frac{1}{N} \1 \cdot \1^\top \cdot \mathbb{E}(\bm \vartheta_{k}(0))|\\
  = & |\1_{i}^\top (W^t-\frac{1}{N}\1\cdot \1^{\top}) \mathbb{E}(\bm \vartheta_{k}(0))|\\
  \le & \|\1_{i}^\top (W^t-\frac{1}{N}\1\cdot \1^{\top}) \|_2 \|\mathbb{E}(\bm \vartheta_{k}(0))\|_{2}  \qquad\text{(H\"{o}lder's inequality)}\\
  \le  &\sqrt{N} \|\1_{i}\|_2  \|W^t-\frac{1}{N}\1\cdot \1^{\top}\|_2 \\
  \le & \sqrt{N}\lambda_2^t
    \end{split}
\end{equation}
holds for all $i\in [N], k\in [M]$.
\end{proof}

\subsection{Proof for Lemma~\ref{varianceproxy_feducb}}
\begin{customlem}{6}\label{sup_varianceproxy_feducb}
When $n_{i,k}(t)\geq \max\{L,(3M+1)N\}, \forall i\in[N]$, with probability at least $1-p_1,$ the concentration bound for $\tilde \vartheta_{i,k}(t)$ is $
     \mathbb P(|\tilde\vartheta_{i,k}(t) - \mu_{k}|\geq \tilde C_{i,k}(t))<\frac{2}{t^2},$
where $p_1=\frac{2N}{n_{i,k}(t)}+p_0$ and
\begin{equation}\label{eqsup:confidence_fed}
    \tilde C_{i,k}(t)= \alpha_1 + \sqrt{2N\left(\frac{128N \log^2T \cdot \log t \cdot \log n_{i,k}(t)}{n^2_{i,k}(t)\epsilon^2}+\frac{1}{n_{i,k}(t)}\right)\log t}.
\end{equation}
Note $\epsilon$ is the privacy level and parameters $L, p_0, \alpha_1$ are the same as Lemma~\ref{concentration1}.
\end{customlem}
\begin{proof}
Recall $\epsilon' := \epsilon \frac{1}{\lceil \log T \rceil}$.
Up to each time $t$, there are at most $\lceil \log t \rceil $ noise terms added to each $\tilde{X}_{i,k}(t)$.
Let $\gamma_{\tau}\sim {\sf Lap}(1/\epsilon')$ and 
\[
\Gamma:= \sum_{\tau=1}^{\lceil \log t \rceil } \gamma_{\tau}.
\]
The sum of these Laplace noise terms satisfy the following inequality \cite{dwork2006calibrating}:  
\[
\mathbb P(\Gamma \geq \lambda) \leq \exp\left(-\frac{\lambda^2}{8\sum_{\tau} \frac{1}{\epsilon'^2}}\right)
\]
Set $$\lambda:= \frac{4   \log T  \cdot \sqrt{\log t \cdot \log n_{i,k}(t)}}{\epsilon},$$ we have
\begin{align*}
     \exp\left(-\frac{\lambda^2}{8\sum_{\tau} \frac{1}{\epsilon'^2}}\right) \le  &\exp\left(-\frac{\lambda^2}{8 \log t \cdot \frac{\log^2 T}{\epsilon^2}}\right)= \exp\left(-\frac{\frac{16  \log^2 T \cdot \log t \cdot \log n_{i,k}(t)}{\epsilon^2}}{8 \log t \cdot \frac{\log^2 T}{\epsilon^2}}\right) 
=  n_{i,k}^{-2}(t).
\end{align*}
Thus with probability at most ${n_{i,k}^{-2}(t)}$, the noise term added to each empirical mean $\tilde{X}$ is larger than $\frac{\lambda }{n_{i,k}(t)}$. 
Now we want to figure out how the added noise would affect the regret, more specifically, the $\vartheta_{i,k}(t).$ According to \eqref{tilde}, $\vartheta_{i,k}(t)$ is a linear combination of $\tilde{X}_{i,k}(t),$ denote $\Delta \vartheta_{i,k}(t)$ as the total noise added in $\tilde{\vartheta}_{i,k}(t),$  $S_{i,k}(t)$ as the cumulative noise added in $\sum_{\tau=1}^t \tilde{\theta}(\tau).$ Then $\Delta \vartheta_{i,k}(t)$ is a linear combination of $S_{i,k}(t)$ and we have with probability at least $1-p_1,$
{
\begin{align*}
    &|\Delta \vartheta_{i,k}(t)| \\
    =&\bigg|\sum_{j}\bigg([W(t)\cdots W(\tau_{1}+1)-W(t)\cdots W(\tau_{2}+1)]_{i,j}S_{j,k}(\tau_{1})+\cdots\\
    &+\frac{[W(t)\cdots W(\tau_{n_{j,k}(t)-1}+1)-W(t)\cdots W(\tau_{n_{j,k}(t)}+1)]_{i,j}}{n_{j,k}(t)-1} \\
    &\quad \cdot S_{j,k}(\tau_{n_{j,k}(t)-1}) +\frac{[W(t)\cdots W(\tau_{n_{j,k}(t)}+1)]_{i,j}}{n_{j,k}(t)}S_{j,k}(\tau_{n_{j,k}(t)})\bigg)\bigg|\\
    &\leq\sum_{j} |c_{i,k,j}^{(0)}|\cdot4 \log T \cdot \sqrt{ \log t \cdot \log n_{i,k}(t)}/\epsilon\\
    &\leq\frac{8N}{n_{i,k}(t)}\log T  \cdot \sqrt{\log t \cdot \log n_{i,k}(t)}/\epsilon,
\end{align*}}
where $$p_1=2Nn_{i,k}(t)\cdot\frac{1}{n_{i,k}(t)^2}+p_0=\frac{2N}{n_{i,k}(t)}+p_0.$$
Define $\tilde \sigma^2_{i,k}(t)$ as the variance proxy of $\tilde{\vartheta}_{i,k}(t).$
According to Algorithm~\ref{alg:privacy}, each new observation comes with an additional independent Laplacian noise term. Noting the linear combination derived from gossiping procedures will not change the independence, the gossiped noise $\Delta \vartheta_{i,k}(t)$ is independent of each $X_{j,k}(\tau),$ for all  $j\in [N], \tau\in\{0,\ldots,t\}$.
Thus when $n_{i,k}(t)\geq \max\{L,(3M+1)N\},\forall i\in[N]:$
\begin{align*}
    \tilde \sigma^2_{i,k}(t)&=\sigma^2_{i,k}(t)+\sigma^2_{\Delta}    <\frac{64N^2 \log^2T \cdot \log t \cdot \log n_{i,k}(t)}{n^2_{i,k}(t)\epsilon^2}+\frac{N}{2n_{i,k}(t)}.\\
\end{align*}
Following the corresponding procedures as Section~\ref{sec:concentration_bound}, we can prove this lemma.

\end{proof}



\end{document}